\icmltitlerunning{Approximate Modified Policy Iteration}
\newtheorem{theorem}{Theorem}
\newtheorem{proposition}{Proposition}
\newtheorem{lemma}{Lemma}
\newtheorem{definition}{Definition}
\theoremstyle{definition}
\newtheorem{remark}{Remark}
\newcommand{\fit}[1]{{\operatorname*{Fit}}_{\cal F}\left({#1}\right)}
\newcommand{\appdx}[1]{\citep[Appendix~{#1}]{scherrer2012}}
\newlength{\minipagewidth}
\newcommand{\bookbox}[1]{\small
\par\medskip\noindent
\framebox[\columnwidth]{
\begin{minipage}{\minipagewidth} {#1} \end{minipage} } \par\medskip }
\def\={\stackrel{\Delta}{=}}
\def\E{\mathbb{E}}
\def\R{\mathbb{R}}
\def\1{\mathds{1}}
\def\G{{\cal G}}
\def\F{{\cal F}}
\def\cf{\emph{c.f.} }
\newdimen\pHeight
\newdimen\pLower
\newdimen\pLineWidth
\newdimen\pKern
\newdimen\pIR
\newsavebox{\Cbox}
\newsavebox{\vertCmplx}
\newdimen\Cheight
\newdimen\Cwidth
\sbox{\Cbox}{\rm C}
\sbox{\vertCmplx}{\rule[\pLower]{\pLineWidth}{\Cheight}}
\sbox{\Cbox}{\usebox{\Cbox}\kern\pKern\usebox{\vertCmplx}}
\def\Re{\mathbb{R}}
\def\Nat{{\rm I\kern\pIR N}}
\def\log{\mathop{\rm log}}
\newcommand{\bsmall}[1]{\vspace{#1}\begin{small}}
\newcommand{\esmall}[1]{\end{small}\vspace{#1}}
\newcommand{\argmax}{\operatorname*{argmax}} 
\newcommand{\argmin}{\operatorname*{argmin}}
\newcommand{\greedy}{\operatorname*{{\cal G}}}
\newcommand\pp[1]{\Gamma^{#1}}
\def\={\stackrel{\Delta}{=}}
\def\E{\mathbb{E}}
\def\R{\mathbb{R}}
\def\G{{\cal G}}
\def\F{{\cal F}}
\def\cf{{see} }
\def\A{{\mathcal{A}}}
\def\D{{\mathcal{D}}}
\def\F{{\mathcal{F}}}
\def\G{{\mathcal{G}}}
\def\L{{\mathcal{L}}}
\def\S{{\mathcal{S}}}
\def\X{{\mathcal{X}}}
\def\vec0{{\boldsymbol{0}}}
\def\setP{{\mathbb{P}}}
\renewcommand{\Re}{\mathbb R}
\newcommand{\discount}{\gamma}
\newcommand{\state}{S}
\newcommand{\action}{\mathcal{A}}
\newcommand{\pol}{\pi}
\newcommand{\polSpace}{\Pi}
\newcommand{\Data}{\mathcal{D}}
\newcommand{\Rmax}{R_{\max}}
\newcommand{\Vmax}{V_{\max}}
\newcommand{\hV}{\widehat{v}}
\newcommand{\hv}{\widehat{v}}
\newcommand{\Qfun}{Q}
\newcommand{\Qmax}{Q_{\max}}
\newcommand{\hQ}{\widehat{Q}}
\newcommand{\nSamples}{N}
\newcommand{\sumSamples}{\sum_{i=1}^{\nSamples}}
\newcommand{\probParam}{\delta}
\newcommand{\q}{Q}
\begin{document} 

\twocolumn[
\icmltitle{Approximate Modified Policy Iteration}

\icmlauthor{Bruno Scherrer}{Bruno.Scherrer@inria.fr}
\icmladdress{INRIA Nancy - Grand Est, Maia Team, FRANCE}
\vspace{-.1cm}
\icmlauthor{Victor Gabillon}{Victor.Gabillon@inria.fr}
\icmlauthor{Mohammad Ghavamzadeh}{Mohammad.Ghavamzadeh@inria.fr}
\icmladdress{INRIA Lille - Nord Europe, Sequel Team, FRANCE}
\vspace{-.1cm}
\icmlauthor{Matthieu Geist}{Matthieu.Geist@supelec.fr}
\icmladdress{Supélec, IMS Research Group, Metz, FRANCE}


\icmlkeywords{Approximate Dynamic Programming, Reinforcment Learning, Performance Bounds}

\vskip 0.3in
]

\begin{abstract} 
Modified policy iteration (MPI) is a dynamic programming (DP) algorithm that contains the two celebrated policy and value iteration methods. Despite its generality, MPI has not been thoroughly studied, especially its approximation form which is used when the state and/or action spaces are large or infinite. In this paper, we propose three implementations of approximate MPI (AMPI) that are extensions of well-known approximate DP algorithms: fitted-value iteration, fitted-Q iteration, and classification-based policy iteration. We provide error propagation analyses that unify those for approximate policy and value iteration. On the last classification-based implementation, we develop a finite-sample analysis that shows that MPI's main parameter allows to control the balance between the estimation error of the classifier and the overall value function approximation.
\end{abstract} 


\section{Introduction}

Modified Policy Iteration (MPI) \citep{Puterman:1978} is an iterative algorithm to compute the optimal policy and value function of a Markov Decision Process (MDP). Starting from an arbitrary value function $v_0$, it generates a sequence of value-policy pairs 
\begin{align}
\pi_{k+1} & =  \greedy{v_k} & & \mbox{(greedy step)}\label{eq:greedystep}\\
v_{k+1} & =  (T_{\pi_{k+1}})^m v_k & &\mbox{(evaluation step)\label{eq:evaluationstep}}
\end{align}
where $\greedy{v_k}$ is a {\em greedy} policy w.r.t.~$v_k$, $T_{\pi_k}$ is the Bellman operator associated to the policy $\pi_k$, and $m \geq 1$ is a parameter. MPI generalizes the well-known dynamic programming algorithms Value Iteration (VI) and Policy Iteration (PI) for values $m=1$ and $m=\infty$, respectively. MPI has less computation per iteration than PI (in a way similar to VI), while enjoys the faster convergence of the PI algorithm~\citep{Puterman:1978}.
In problems with large state and/or action spaces, approximate versions of VI (AVI) and PI (API) have been the focus of a rich literature (see e.g.~\citealt{Bertsekas:1996,Szepesvari:2010}). 
 The aim of this paper is to show that, similarly to its exact form, approximate MPI (AMPI) may represent an interesting alternative to AVI and API algorithms.

In this paper, we propose three implementations of AMPI (Sec.~\ref{algorithms}) that generalize the AVI implementations of~\citet{Ernst:2005,Antos:2007,Munos:2008} and the classification-based API algorithm of~\citet{Lagoudakis:2003b,Fern:2006,Lazaric:2010,Gabillon11CP}. We then provide an error propagation analysis of AMPI (Sec.~\ref{analysis}), which shows how the $L_p$-norm of its performance loss can be controlled by the error at each iteration of the algorithm. We show that the error propagation analysis of AMPI is more involved than that of AVI and API. This is due to the fact that neither the contraction nor monotonicity arguments, that the error propagation analysis of these two algorithms rely on, hold for AMPI. The analysis of this section unifies those for AVI and API and is applied to the AMPI implementations presented in Sec.~\ref{algorithms}. 
We detail the analysis of the classification-based implementation of MPI (CBMPI) of Sec.~\ref{algorithms} by providing its finite sample analysis in Sec.~\ref{cbmpi}.
Our analysis indicates that the parameter $m$ allows us to balance the estimation error of the classifier with the overall quality of the value approximation.  We report some preliminary results of applying CBMPI to standard benchmark problems and comparing it with some existing algorithms in \appdx{\ref{s:experiments}}.

\section{Background}
\label{background}

We consider a discounted MDP $\langle \S, \A, P, r, \gamma\rangle$, where $\S$ is a state space, $\A$ is a finite action space, $P(ds'|s,a)$, for all $(s,a)$, is a probability kernel on $\S$, the reward function $r:\S \times\A \rightarrow \R$ is bounded by $\Rmax$, and $\gamma \in (0,1)$ is a discount factor.  A deterministic policy is defined as a mapping $\pi:\S \rightarrow\A$. For a policy $\pi$, we may write $r_\pi(s)=r\big(s,\pi(s)\big)$ and $P_\pi(ds'|s)=P\big(ds'|s,\pi(s)\big)$. The value of policy $\pi$ in a state $s$ is defined as the expected discounted sum of rewards received starting from state $s$ and following the policy $\pi$, i.e.,$v_\pi(s)=\E \big[ \left. \sum_{t=0}^{\infty}\gamma^t r_\pi(s_t)\right| s_0=s, s_{t+1}\sim P_\pi(\cdot|s_t)\big].$ Similarly, the action-value function of a policy $\pi$ at a state-action pair $(s,a)$, $Q_\pi(s,a)$, is the expected discounted sum of rewards received starting from state $s$, taking action $a$, and then following the policy. Since the rewards are bounded by $\Rmax$, the values and action-values should be bounded by $\Vmax=\Qmax=\Rmax/(1-\gamma)$.
%
%
The Bellman operator $T_\pi$ of policy $\pi$ takes a function $f$ on $\S$ as input and returns the function $T_\pi f$ defined as $\forall s, ~[T_{\pi}f] (s)=\E\big[ r_\pi(s)+\gamma f(s') ~|~ s' \sim P_\pi(.|s)\big]$, or in compact form, $T_\pi f=r_\pi+\gamma P_\pi f$. It is known that $v_\pi$ is the unique fixed-point of $T_\pi$. Given a function $f$ on $\S$, we say that a policy $\pi$ is greedy w.r.t.~$f$, and write it as $\pi = \greedy f$, if $\forall s,\;(T_\pi f)(s)=\max_a (T_a f)(s)$, or equivalently $T_\pi f=\max_{\pi'} (T_{\pi'} f)$. We denote by $v_*$ the optimal value function. It is also known that $v_*$ is the unique fixed-point of the Bellman optimality operator $T: v \rightarrow \max_\pi T_\pi v = T_{\greedy(v)}v$, and that a policy $\pi_*$ that is greedy w.r.t.~$v_*$ is optimal and its value satisfies $v_{\pi_*}=v_*$.

\section{Approximate MPI Algorithms}
\label{algorithms}

In this section, we describe three approximate MPI (AMPI) algorithms. These algorithms rely on a function space $\F$ to approximate value functions, and in the third algorithm, also on a policy space $\Pi$ to represent greedy policies. In what follows, we describe the iteration $k$ of these iterative algorithms. 


\subsection{AMPI-V}
\label{mpiv}

For the first and simplest AMPI algorithm presented in the paper, we assume that the values $v_k$ are represented in a function space $\F \subseteq \R^{|\S|}$. In any state $s$, the action $\pi_{k+1}(s)$ that is greedy w.r.t.~$v_k$ can be estimated as follows:
\begin{equation}
\label{eq:mpiv_greedy}
\pi_{k+1}(s) = \arg\max_{a\in\A} \frac{1}{M} \Big( \sum_{j=1}^M r^{(j)}_a + \gamma v_k(s^{(j)}_a) \Big),
\end{equation}
where $\forall a\in\A$ and $1\le j\le M$, $r^{(j)}_a$ and $s^{(j)}_a$ are samples of rewards and next states when action $a$ is taken in state $s$. Thus, approximating the greedy action in a state $s$ requires $M|\A|$ samples. The algorithm works as follows. It first samples $N$ states from a distribution $\mu$, i.e.,~$\{s^{(i)}\}_{i=1}^N\sim\mu$. From each sampled state $s^{(i)}$, it generates a rollout of size $m$, i.e.,~$\big( s^{(i)},a^{(i)}_0,r^{(i)}_0,s^{(i)}_1,\dots,a^{(i)}_{m-1},r^{(i)}_{m-1}, s^{(i)}_m \big)$, where $a^{(i)}_t$ is the action suggested by $\pi_{k+1}$ in state $s_t^{(i)}$, computed using Eq.~\ref{eq:mpiv_greedy}, and $r^{(i)}_t$ and $s^{(i)}_{t+1}$ are the reward and next state induced by this choice of action. For each $s^{(i)}$, we then compute a rollout estimate
%
$
\hv_{k+1}(s^{(i)})=  \sum_{t=0}^{m-1} \gamma^t r^{(i)}_t+\gamma^m v_k(s^{(i)}_m),
$
%
which is an unbiased estimate of $\left[\left(T_{\pi_{k+1}}\right)^m v_k\right](s^{(i)})$. Finally, $v_{k+1}$ is computed as the best fit in $\cal F$ to these estimates, i.e.,
\begin{equation*}
\label{eq:mpiv_eval}
v_{k+1} = \fit{\left\{\big(s^{(i)}, \hv_{k+1}(s^{(i)})\big) \right\}_{i=1}^N}.
\end{equation*}
Each iteration of AMPI-V requires $N$ rollouts of size $m$, and in each rollout any of the $|\A|$ actions needs $M$ samples to compute Eq.~\ref{eq:mpiv_greedy}. This gives a total of $Nm(M|\A|+1)$ transition samples. Note that the fitted value iteration algorithm~\citep{Munos:2008} is a special case of AMPI-V when $m=1$.


\subsection{AMPI-Q}


In AMPI-Q, we replace the value function $v:\S \rightarrow \R$ with an action-value function $\q:\S \times \A \rightarrow \R$. The Bellman operator for a policy $\pi$ at a state-action pair $(s,a)$ can then be written as 
\begin{equation*}
[T_\pi \q](s,a)=\E \big[ r_{\pi}(s,a)+\gamma \q(s',\pi(s')) | s' \sim P(\cdot|s,a)\big],
\end{equation*}
and the greedy operator is defined as
\begin{equation*}
\pi=\greedy{\q} ~~\Leftrightarrow~~ \forall s\quad\pi(s)=\arg\max_{a\in\A} \q(s,a).
\end{equation*}
In AMPI-Q, action-value functions $\q_k$ are represented in a function space $\F \subseteq \R^{|\S\times \A|}$, and the greedy action w.r.t.~$\q_k$ at a state $s$, i.e.,~$\pi_{k+1}(s)$, is computed as
\begin{equation}
\label{eq:mpiq_greedy}
\pi_{k+1}(s) \in \arg\max_{a\in\A}\;\q_k(s,a).
\end{equation}
The {\em evaluation step} is similar to that of AMPI-V, with the difference that now we work with state-action pairs. We sample $N$ state-action pairs from a distribution $\mu$ on $\mathcal{S} \times \action$ and build a rollout set $\D_k=\{(s^{(i)},a^{(i)})\}_{i=1}^N,\;(s^{(i)},a^{(i)})\sim\mu$. For each $(s^{(i)},a^{(i)})\in\D_k$, we generate a rollout of size $m$, i.e.,~$\big( s^{(i)},a^{(i)},r^{(i)}_0,s^{(i)}_1,a^{(i)}_1,\cdots,s^{(i)}_m, a^{(i)}_m \big)$, where the first action is $a^{(i)}$, $a^{(i)}_t$ for $t\geq 1$ is the action suggested by $\pi_{k+1}$ in state $s_t^{(i)}$ computed using Eq.~\ref{eq:mpiq_greedy}, and $r^{(i)}_t$ and $s^{(i)}_{t+1}$ are the reward and next state induced by this choice of action. For each $(s^{(i)},a^{(i)})\in\D_k$, we then compute the rollout estimate
\begin{equation*}
\widehat{\q}_{k+1}(s^{(i)},a^{(i)}) = \sum_{t=0}^{m-1} \gamma^t r^{(i)}_t+\gamma^m \q_k(s^{(i)}_m,a_m^{(i)}), 
\end{equation*}
which is an unbiased estimate of $\big[(T_{\pi_{k+1}})^m  \q_k\big](s^{(i)},a^{(i)})$. Finally, $\q_{k+1}$ is the best fit to these estimates in $\F$, i.e.,
\begin{equation*}
\q_{k+1} = \fit{\left\{\big((s^{(i)},a^{(i)}),\widehat{\q}_{k+1}(s^{(i)},a^{(i)})\big)\right\}_{i=1}^N}.
\end{equation*}
Each iteration of AMPI-Q requires $Nm$ samples, which is less than that for AMPI-V. However, it uses a hypothesis space on state-action pairs instead of states. Note that the fitted-Q iteration algorithm~\citep{Ernst:2005,Antos:2007} is a special case of AMPI-Q when $m=1$.


\subsection{Classification-Based MPI}

\begin{figure}[t]
\bookbox{
\begin{small}
\begin{algorithmic}
\STATE \textbf{Input:} Value function space $\F$, policy space $\polSpace$, state distribution $\mu$
\STATE \textbf{Initialize:} Let $\pi_1\in\polSpace$ be an arbitrary policy and $v_0\in\F$ an arbitrary value function
\FOR{$k = 1,2,\ldots$}
\STATE \textbf{\textit{\textbullet$\:$ Perform rollouts:}}
\STATE Construct the rollout set $\Data_k=\{s^{(i)}\}_{i=1}^n,\;s^{(i)}\stackrel{\text{iid}}{\sim}\mu$
\FORALL{states $s^{(i)}\in\Data_k$}
\STATE Perform a rollout 
  and return $\hv_{k}(s^{(i)})$
\ENDFOR
\STATE Construct the rollout set $\Data'_k=\{s^{(i)}\}_{i=1}^\nSamples,\;s^{(i)}\stackrel{\text{iid}}{\sim}\mu$
\FORALL{states $s^{(i)}\in\Data'_k$ and actions $a\in\action$}
\FOR{$j=1$ to $M$}
\STATE Perform a rollout and return $R_k^j(s^{(i)},a)$
\ENDFOR
\STATE $\hQ_k(s^{(i)},a) = \frac{1}{M} \sum_{j=1}^M R_k^j(s^{(i)},a)$
\ENDFOR
\STATE \textbf{\textit{\textbullet$\:$ Approximate value function:}}
\STATE $v_k \in \argmin\limits_{v \in \F} \widehat{\L}_{k}^{\F}(\widehat{\mu};v)$\hfill {\bf (regression)}
\STATE \textbf{\textit{\textbullet$\:$ Approximate greedy policy:}}
\STATE $\pi_{k+1} \in \argmin\limits_{\pol\in\polSpace}\widehat{\L}_{k}^{\,\Pi}(\widehat{\mu};\pi)$\hfill {\bf (classification)}
\ENDFOR
\end{algorithmic}
\end{small}}
\caption{The pseudo-code of the CBMPI algorithm.}\label{f:algorithm}
\end{figure}

The third AMPI algorithm presented in this paper, called classification-based MPI (CBMPI), uses an explicit representation for the policies $\pi_k$, in addition to the one used for value functions $v_k$. The idea is similar to the classification-based PI algorithms~\citep{Lagoudakis:2003b,Fern:2006,Lazaric:2010,Gabillon11CP} in which we search for the greedy policy in a policy space $\Pi$ (defined by a classifier) instead of computing it from the estimated value or action-value function (like in AMPI-V and AMPI-Q).

In order to describe CBMPI, we first rewrite the MPI formulation (Eqs.~\ref{eq:greedystep} and~\ref{eq:evaluationstep}) as
\begin{align}
v_{k} & =  (T_{\pi_{k}})^m v_{k-1} &&\mbox{(evaluation step)} \label{eq:greedystep2}\\
\pi_{k+1} & =  \greedy \big[ (T_{\pi_{k}})^m v_{k-1}\big] &&\mbox{(greedy step)} \label{eq:evaluationstep2}
\end{align}
Note that in the new formulation both $v_k$ and $\pi_{k+1}$ are functions of $(T_{\pi_{k}})^m v_{k-1}$. CBMPI is an approximate version of this new formulation. As described in Fig.~\ref{f:algorithm}, CBMPI begins with arbitrary initial policy $\pi_1 \in  \Pi $ and value function $v_0 \in \F$.\footnote{Note that the function space $\F$ and policy space $\Pi$ are automatically defined by the choice of the regressor and classifier, respectively.} At each iteration $k$, a new value function $v_k$ is built as the best approximation of the $m$-step Bellman operator $(T_{\pi_{k}} )^mv_{k-1}$ in $\F$ ({\em evaluation step}). This is done by solving a regression problem whose target function is $(T_{\pi_{k}} )^mv_{k-1}$.
%
%
To set up the regression problem, we build a rollout set $\D_k$ by sampling $n$ states i.i.d.~from a distribution $\mu$.\footnote{Here we used the same sampling distribution $\mu$ for both regressor and classifier, but in general different distributions may be used for these two components.} For each state $s^{(i)} \in \D_k$, we generate a rollout $\big(s^{(i)},a_0^{(i)},r_0^{(i)},s_1^{(i)},\ldots,a_{m-1}^{(i)},r_{m-1}^{(i)},s_m^{(i)}\big)$ of size $m$, where $a_t^{(i)}=\pi_{k}(s_t^{(i)})$, and $r_t^{(i)}$ and $s_{t+1}^{(i)}$ are the reward and next state induced by this choice of action. From this rollout, we compute an unbiased estimate $\widehat{v}_k(s^{(i)})$ of $\big[(T_{\pi_k})^mv_{k-1}\big](s^{(i)})$ as
\begin{equation}
\label{eq:rollout0}
\hV_k(s^{(i)}) =\sum^{m-1}_{t=0}\discount^t r_t^{(i)}+\discount^mv_{k-1}(s^{(i)}_m),
\end{equation}
and use it to build a training set $\big\{\big(s^{(i)},\hV_k(s^{(i)})\big)\big\}_{i=1}^n$. This training set is then used by the regressor to compute $v_k$ as an estimate of $(T_{\pi_{k}} )^mv_{k-1}$.

The {\em greedy step} at iteration $k$ computes the policy $\pi_{k+1}$ as the best approximation of $\G\big[(T_{\pi_k})^mv_{k-1}\big]$ by solving a cost-sensitive classification problem. From the definition of a greedy policy, if $\pi=\G\big[(T_{\pi_k})^mv_{k-1}\big]$, for each $s\in\mathcal{S}$, we have
\begin{equation}
\label{eq:greedy-CBMPI1}
\big[T_{\pi}(T_{\pi_k})^mv_{k-1}\big](s)=\max_{a\in\A}\big[T_a(T_{\pi_k})^mv_{k-1}\big](s).
\end{equation}
By defining $Q_k(s,a)=\big[T_a(T_{\pi_k})^mv_{k-1}\big](s)$, we may rewrite Eq.~\ref{eq:greedy-CBMPI1} as
\begin{equation}
\label{eq:greedy-CBMPI2}
Q_k\big(s,\pi(s)\big)=\max_{a\in\A}Q_k(s,a).
\end{equation}
The cost-sensitive error function used by CBMPI is of the form 
\begin{small}
\begin{align*}
\L_{\pi_k,v_{k-1}}^{\Pi}(\mu;\pol)=\int_{\state}\Big[\max_{a\in\action} \Qfun_k(s,a)-\Qfun_k\big(s,\pol(s)\big)\Big]\mu(ds).
\end{align*}
\end{small}
To simplify the notation we use $\L_{k}^{\Pi}$ instead of $\L_{\pi_k,v_{k-1}}^{\Pi}$. To set up this cost-sensitive classification problem, we build a rollout set $\D'_k$ by sampling $N$ states i.i.d.~from a distribution $\mu$. For each state $s^{(i)}\in\Data'_k$ and each action $a\in\action$, we build $M$ independent rollouts of size $m+1$, i.e.,\footnote{We may implement CBMPI more sample efficient by reusing the rollouts generated for the greedy step in the evaluation step. 
}
\begin{equation*}
\big(s^{(i)},a,r_0^{(i,j)},s_1^{(i,j)},a_1^{(i,j)},\ldots,a_m^{(i,j)},r_m^{(i,j)},s_{m+1}^{(i,j)}\big)_{j=1}^M,
\end{equation*}
where for $t\geq 1$, $a_t^{(i,j)}=\pi_k(s_t^{(i,j)})$, and $r_t^{(i,j)}$ and $s_{t+1}^{(i,j)}$ are the reward and next state induced by this choice of action. From these rollouts, we compute an unbiased estimate of $Q_k(s^{(i)},a)$ as $\hQ_k(s^{(i)},a)=\frac{1}{M}\sum_{j=1}^M R_k^j(s^{(i)},a)$ where
\begin{align*}
R_k^j(s^{(i)},a) = \sum_{t=0}^m\gamma^tr_t^{(i,j)}+\gamma^{m+1}v_{k-1}(s_{m+1}^{(i,j)}).
\end{align*}
Given the outcome of the rollouts, CBMPI uses a cost-sensitive classifier to return a policy $\pi_{k+1}$ that minimizes the following {\em empirical error}
\begin{small}
\begin{equation*}
\widehat{\L}_{k}^{\,\Pi}(\widehat{\mu};\pi) = \frac{1}{N} \sumSamples\Big[\max_{a\in\action} \hQ_k(s^{(i)},a) - \hQ_k\big(s^{(i)},\pol(s^{(i)})\big)\Big],
\end{equation*}
\end{small}
with the goal of minimizing the true error $\L_k^\Pi(\mu;\pi)$. 

Each iteration of CBMPI requires $nm+M|\A|N(m+1)$ (or $M|\A|N(m+1)$ in case we reuse the rollouts, see Footnote~3) transition samples. Note that when $m$ tends to $\infty$, we recover the DPI algorithm proposed and analyzed by~\citet{Lazaric:2010}.

\section{Error propagation}
\label{analysis}

In this section, we derive a general formulation for propagation of error through the iterations of an AMPI algorithm. The line of analysis for error propagation is different in VI and PI algorithms. VI analysis is based on the fact that this algorithm computes the fixed point of the Bellman optimality operator, and this operator is a $\gamma$-contraction in max-norm~\citep{Bertsekas:1996,Munos:2007}. On the other hand, it can be shown that the operator by which PI updates the value from one iteration to the next is not a contraction in max-norm in general. Unfortunately, we can show that the same property holds for MPI when it does not reduce to VI (i.e.,~$m>1$). 
\begin{proposition}\label{prop:contract}
{If $m>1$, there exists no norm for which the operator that MPI uses to update the values from one iteration to the next is a contraction.}
\end{proposition}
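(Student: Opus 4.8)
The operator in question is $O(v) = (T_{\greedy(v)})^m v$, the map sending $v_k$ to $v_{k+1}$ in Eqs.~\ref{eq:greedystep}--\ref{eq:evaluationstep}. The plan is to exploit a purely topological obstruction rather than to estimate a contraction factor: any map that is a contraction in \emph{some} norm is Lipschitz, hence continuous, with respect to that norm; and since all norms on the finite-dimensional space $\R^{|\S|}$ induce the same topology, continuity is a norm-free property. It therefore suffices to exhibit, for every $m>1$, an MDP for which $O$ fails to be continuous.

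To see why discontinuity is the right target, recall that $\greedy$ partitions $\R^{|\S|}$ into finitely many polyhedral regions, one per policy $\pi$, on each of which $O$ coincides with the affine map $v \mapsto \sum_{i=0}^{m-1}\gamma^i P_\pi^i\, r_\pi + \gamma^m P_\pi^m v$. The only candidate points of discontinuity are the shared boundaries of these regions, that is, value functions $v_0$ at which two distinct policies $\pi_1$ and $\pi_2$ are simultaneously greedy. At such a $v_0$ the greedy step forces $T_{\pi_1} v_0 = T_{\pi_2} v_0 =: u$, but this equates only the \emph{first} of the $m$ backups. The one-sided limits of $O$ at $v_0$ are $(T_{\pi_1})^{m-1} u$ and $(T_{\pi_2})^{m-1} u$, and for $m>1$ there is no reason for these to coincide, since $T_{\pi_1}$ and $T_{\pi_2}$ act differently on the common vector $u$. (For $m=1$ both limits equal $u$, consistent with the Bellman optimality operator $T$ being a $\gamma$-contraction.)

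The remaining and only substantive work is to turn this mechanism into a concrete witness. I would use a two-state MDP $\S=\{s_1,s_2\}$ in which $s_1$ has two actions, one self-looping on $s_1$ and one moving to the absorbing state $s_2$ with reward $r(s_2)\neq 0$, with rewards arranged so that the two actions tie in the greedy step exactly along the diagonal $\{v(s_1)=v(s_2)\}$. Choosing $v_0$ on this diagonal, the two tied policies induce different next-state rows, so that after the common first backup $u$ the values $(T_{\pi_1})^{m-1}u$ and $(T_{\pi_2})^{m-1}u$ already differ in the $s_1$-coordinate (for $m=2$ the gap is proportional to $r(s_2)$). Since $v_0$ lies on the common boundary of the two open half-planes $\{v(s_1)>v(s_2)\}$ and $\{v(s_1)<v(s_2)\}$, both one-sided limits are genuinely attained, they disagree, and hence $O$ is discontinuous at $v_0$; this rules out contraction in every norm.

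I expect the main obstacle to be the bookkeeping of this construction: one must check that the two regions genuinely share $v_0$ on their boundary, so that both one-sided limits are approached, and that the chosen rewards and transitions make the second (and, for larger $m$, subsequent) backup values \emph{strictly} unequal. The structural fact that the first backups agree while later ones may diverge is immediate from $T_\pi f = r_\pi + \gamma P_\pi f$; confirming a nonzero jump for a given $m$ requires only verifying that the discrepancy $u(s_1)\neq u(s_2)$ does not accidentally cancel through the remaining $m-1$ applications, which a single explicit numerical choice settles.
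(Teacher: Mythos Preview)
Your argument is correct, and the underlying phenomenon you exploit is exactly the one the paper uses, but the framing differs. The paper proceeds by a direct computation: on a two-state deterministic MDP with actions \emph{stay}/\emph{change} and rewards $r(s_1)=0$, $r(s_2)=1$, it takes the pair $v=(\epsilon,0)$ and $v'=(0,\epsilon)$, computes the greedy policies and the iterates $(T_\pi)^m v$, $(T_{\pi'})^m v'$ explicitly, and observes that their difference equals $\big(\tfrac{\gamma-\gamma^m}{1-\gamma},\tfrac{\gamma-\gamma^m}{1-\gamma}\big)$ independently of $\epsilon$, while $v'-v$ has norm of order~$\epsilon$; letting $\epsilon\to 0$ kills any candidate contraction constant for any norm. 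Your route instead identifies this as a discontinuity argument: contraction in some norm forces continuity, continuity is norm-independent on $\R^{|\S|}$, and the MPI map is piecewise affine with a genuine jump across the tie set $\{v(s_1)=v(s_2)\}$ whenever $m>1$. What your approach buys is a clean structural explanation of the role of $m$ --- the first backup $T_{\pi_1}v_0=T_{\pi_2}v_0$ is forced by greediness, but subsequent backups are not --- which makes the $m=1$/$m>1$ dichotomy transparent rather than emerging from the vanishing of $\gamma-\gamma^m$ in a formula. The paper's approach, in turn, is entirely elementary and self-contained: no topology, just two vectors and an explicit calculation. Your proposed witness MDP is essentially the paper's (up to which state carries the reward), and your sketch of the bookkeeping is accurate; the verification you flag as the main obstacle is indeed routine once the MDP is fixed.
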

\begin{proof}
Consider a deterministic MDP with two states $\{s_1,s_2\}$, two actions $\{change,stay\}$, rewards $r(s_1)=0, r(s_2)=1$, and 
transitions $P_{ch}(s_2|s_1)=P_{ch}(s_1|s_2)=P_{st}(s_1|s_1)=P_{st}(s_2|s_2)=1$.
Consider the following two value functions $v=(\epsilon,0)$ and $v'=(0,\epsilon)$ with $\epsilon>0$. Their corresponding greedy policies are $\pi=(st,ch)$ and $\pi'=(ch,st)$, and the next iterates of $v$ and $v'$ can be computed as $(T_\pi)^mv=\begin{pmatrix} \gamma^m \epsilon \\ 1+\gamma^m \epsilon \end{pmatrix}$ and
$(T_{\pi'})^m v' = \begin{pmatrix} \frac{\gamma-\gamma^m}{1-\gamma}+\gamma^m \epsilon \\\frac{1-\gamma^m}{1-\gamma}+\gamma^m \epsilon \end{pmatrix}$. Thus, $(T_{\pi'})^m v'-(T_\pi)^m v=\begin{pmatrix} \frac{\gamma-\gamma^m}{1-\gamma} \\ \frac{\gamma-\gamma^m}{1-\gamma} \end{pmatrix}$ while $v'-v = \begin{pmatrix}-\epsilon \\ \epsilon\end{pmatrix}$. Since $\epsilon$ can be arbitrarily small, the norm of $(T_{\pi'})^m v' -(T_\pi)^m v$ can be arbitrarily larger than the norm of $v-v'$ as long as $m>1$. 
\end{proof}
We also know that the analysis of PI usually relies on the fact that the sequence of the generated values is non-decreasing~\citep{Bertsekas:1996,Munos:2003}. Unfortunately, it can easily be shown that for $m$ finite, the value functions generated by MPI may decrease (it suffices to take a very high initial value). It can be seen from what we just described and Proposition~\ref{prop:contract} that for $m\neq 1$ and $\infty$, MPI is neither contracting nor non-decreasing, and thus, a new line of proof is needed for the propagation of error in this algorithm.

To study error propagation in AMPI, we introduce an abstract algorithmic model that accounts for potential errors. AMPI starts with an arbitrary value $v_0$ and at each iteration $k\geq 1$ computes the greedy policy w.r.t.~$v_{k-1}$ with some error $\epsilon_k'$, called the {\em greedy step error}. Thus, we write the new policy $\pi_k$ as
\begin{equation}
\label{eq:abstractpi}
\pi_{k}  =  \widehat\greedy_{\epsilon'_k} v_{k-1}. 
\end{equation}
Eq.~\ref{eq:abstractpi} means that for any policy $\pi'$, 
\begin{equation*}
\label{eq:defapproxgreedy}
T_{\pi'} v_{k-1} \leq T_{\pi_k} v_{k-1} + \epsilon_k'.
\end{equation*}
AMPI then generates the new value function $v_k$ with some error $\epsilon_k$, called the {\em evaluation step error}
\begin{equation}
\label{eq:abstractv}
v_{k}  =  (T_{\pi_{k}})^m v_{k-1} + \epsilon_{k}. 
\end{equation}
Before showing how these two errors are propagated through the iterations of AMPI, let us first define them in the context of each of the algorithms presented in Section~\ref{algorithms} separately.

{\bf AMPI-V:} $\epsilon_k$ is the error in fitting the value function $v_k$. This error can be further decomposed into two parts: the one related to the approximation power of $\F$ and the one due to the finite number of samples/rollouts. $\epsilon'_k$ is the error due to using a finite number of samples $M$ for estimating the greedy actions.

{\bf AMPI-Q:} $\epsilon'_k=0$ and $\epsilon_k$ is the error in fitting the state-action value function $Q_k$.
   
{\bf CBMPI:} This algorithm iterates as follows:
\begin{align*}
v_{k} & =  (T_{\pi_{k}})^m v_{k-1} + \epsilon_k \\
\pi_{k+1} & =   \widehat\greedy_{\epsilon'_{k+1}} \left[ (T_{\pi_{k}})^m v_{k-1} \right] 
\end{align*}
Unfortunately, this does not exactly match with the model described in Eqs.~\ref{eq:abstractpi} and~\ref{eq:abstractv}. By introducing the auxiliary variable $w_k \= (T_{\pi_{k}})^m v_{k-1} $, we have $v_k = w_k + \epsilon_{k}$, and thus, we may write
\begin{equation}
\label{eq:def_error2_cbmpi}
\pi_{k+1} = \widehat\greedy_{\epsilon'_{k+1}}\left[ w_k \right].
\end{equation}
Using $v_{k-1}=w_{k-1}+\epsilon_{k-1}$, we have
\begin{align}
\label{eq:def_error_cbmpi}
w_k&=(T_{\pi_k})^m v_{k-1} = (T_{\pi_k})^m (w_{k-1}+\epsilon_{k-1}) \nonumber \\
&=(T_{\pi_k})^m w_{k-1}+ (\gamma P_{\pi_k})^m  \epsilon_{k-1}.
\end{align}
Now, Eqs.~\ref{eq:def_error2_cbmpi} and~\ref{eq:def_error_cbmpi} exactly match Eqs.~\ref{eq:abstractpi} and~\ref{eq:abstractv} by replacing $v_k$ with $w_k$ and $\epsilon_k$ with $(\gamma P_{\pi_k})^m \epsilon_{k-1} $. 

The rest of this section is devoted to show how the errors $\epsilon_k$ and $\epsilon'_k$ propagate through the iterations of an AMPI algorithm. We only outline the main arguments that will lead to the performance bound of Thm.~\ref{lpbound} and report most proofs in \citep{scherrer2012}. We follow the line of analysis developped by~\citet{scherrer2010}. The results are obtained using the following three quantities:

{\bf 1)} The distance between the optimal value function and the value before approximation at the $k^\text{th}$ iteration: $d_k\=v_*-(T_{\pi_k})^m v_{k-1}=v_*-(v_k-\epsilon_k)$.

{\bf 2)} The shift between the value before approximation and the value of the policy at the $k^\text{th}$ iteration: $s_k\=(T_{\pi_k})^m v_{k-1}-v_{\pi_k}=(v_k-\epsilon_k)-v_{\pi_k}$.

{\bf 3)} The Bellman residual at the $k^\text{th}$ iteration: $b_k\=v_k - T_{\pi_{k+1}} v_k.$

We are interested in finding an upper bound on the {\bf loss} $l_k \= v_*-v_{\pi_k} = d_k+s_k$. To do so, we will upper bound $d_k$ and $s_k$, which requires a bound on the Bellman residual $b_k$. More precisely, the core of our analysis is to prove the following point-wise inequalities for our three quantities of interest.
\begin{lemma}[Proof in \appdx{\ref{proof:3termsAMPI}}]
\label{lemma:core}
Let $k \ge 1$, $x_k\=(I-\gamma P_{\pi_k})\epsilon_k + \epsilon'_{k+1}$ and $y_k \=-\gamma P_{\pi_*}\epsilon_k+\epsilon'_{k+1}$. We have:
\begin{align*}
b_k &\leq (\gamma P_{\pi_k})^m b_{k-1} + x_k, \\
d_{k+1} &\leq \gamma P_{\pi_*}d_k + y_k + \sum_{j=1}^{m-1} (\gamma P_{\pi_{k+1}})^j b_k , \\
s_{k} &= (\gamma P_{\pi_{k}})^m (I-\gamma P_{\pi_{k}})^{-1}b_{k-1}.
\end{align*}
\end{lemma}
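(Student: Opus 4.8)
The plan is to derive each of the three relations directly from the abstract model (Eqs.~\ref{eq:abstractpi} and~\ref{eq:abstractv}), exploiting four elementary facts: (i) the Bellman operator is affine, so $(T_\pi)^j f - (T_\pi)^j g = (\gamma P_\pi)^j (f-g)$ for any $f,g$ and $j\ge 0$; (ii) the fixed-point identities $v_{\pi_k}=T_{\pi_k}v_{\pi_k}$ and $v_*=T_{\pi_*}v_*$; (iii) the approximate greedy inequality, which for $\pi_{k+1}=\widehat\greedy_{\epsilon'_{k+1}}v_k$ reads $T_{\pi'}v_k\le T_{\pi_{k+1}}v_k+\epsilon'_{k+1}$ for every policy $\pi'$; and (iv) the fact that each $\gamma P_\pi$ has non-negative entries, so a pointwise inequality is preserved under left-multiplication by $\gamma P_\pi$ (and hence by any $(\gamma P_\pi)^j$ or by $(I-\gamma P_{\pi})^{-1}=\sum_{j\ge0}(\gamma P_\pi)^j$). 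Throughout I will use the two rewritings $v_k-\epsilon_k=(T_{\pi_k})^mv_{k-1}=v_*-d_k$, equivalently $v_*-v_k=d_k-\epsilon_k$.

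For the Bellman residual, I start from $b_k=v_k-T_{\pi_{k+1}}v_k$ and apply (iii) with $\pi'=\pi_k$ to get $b_k\le v_k-T_{\pi_k}v_k+\epsilon'_{k+1}$. Substituting $v_k=(T_{\pi_k})^mv_{k-1}+\epsilon_k$ and $T_{\pi_k}v_k=(T_{\pi_k})^{m+1}v_{k-1}+\gamma P_{\pi_k}\epsilon_k$ leaves $v_k-T_{\pi_k}v_k=\big[(T_{\pi_k})^mv_{k-1}-(T_{\pi_k})^{m+1}v_{k-1}\big]+(I-\gamma P_{\pi_k})\epsilon_k$; fact (i) turns the bracket into $(\gamma P_{\pi_k})^m\big(v_{k-1}-T_{\pi_k}v_{k-1}\big)=(\gamma P_{\pi_k})^m b_{k-1}$, since $v_{k-1}-T_{\pi_k}v_{k-1}=b_{k-1}$ by definition. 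Collecting the error terms into $x_k=(I-\gamma P_{\pi_k})\epsilon_k+\epsilon'_{k+1}$ yields the first claim. For the shift, fact (ii) gives $s_k=(T_{\pi_k})^mv_{k-1}-(T_{\pi_k})^mv_{\pi_k}=(\gamma P_{\pi_k})^m(v_{k-1}-v_{\pi_k})$, and the resolvent identity $v_{k-1}-v_{\pi_k}=(I-\gamma P_{\pi_k})^{-1}b_{k-1}$ (obtained by applying $(I-\gamma P_{\pi_k})^{-1}$ to $v_{k-1}-v_{\pi_k}=b_{k-1}+\gamma P_{\pi_k}(v_{k-1}-v_{\pi_k})$) produces the stated equality, with no error term since no greedy step intervenes.

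For the distance, I use the telescoping identity $v_k-(T_{\pi_{k+1}})^mv_k=\sum_{j=0}^{m-1}(\gamma P_{\pi_{k+1}})^j b_k$, each summand being $(\gamma P_{\pi_{k+1}})^j(v_k-T_{\pi_{k+1}}v_k)$ by fact (i). Hence $d_{k+1}=v_*-(T_{\pi_{k+1}})^mv_k=(v_*-v_k)+b_k+\sum_{j=1}^{m-1}(\gamma P_{\pi_{k+1}})^jb_k$. The crux is to combine the first two terms via $(v_*-v_k)+b_k=v_*-T_{\pi_{k+1}}v_k$, then bound this using (iii) with $\pi'=\pi_*$, giving $v_*-T_{\pi_{k+1}}v_k\le v_*-T_{\pi_*}v_k+\epsilon'_{k+1}$. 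Using (ii) together with $v_*-v_k=d_k-\epsilon_k$, the right-hand side equals $\gamma P_{\pi_*}(v_*-v_k)+\epsilon'_{k+1}=\gamma P_{\pi_*}d_k-\gamma P_{\pi_*}\epsilon_k+\epsilon'_{k+1}=\gamma P_{\pi_*}d_k+y_k$, which completes the third claim.

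The main delicacy is this last step: one must first peel the $j=0$ term out of the telescoping sum and fuse it with $v_*-v_k$ to expose $v_*-T_{\pi_{k+1}}v_k$ \emph{before} invoking the approximate greedy inequality at $\pi'=\pi_*$, since it is only on this combined quantity that the $\epsilon'$-suboptimality yields a clean one-step contraction by $\gamma P_{\pi_*}$. Beyond that, the only thing to watch is the bookkeeping of fact (iv): every inequality is pointwise and is multiplied only by the non-negative operators $(\gamma P_\pi)^j$ and $(I-\gamma P_{\pi})^{-1}$, so the inequality directions are preserved at each substitution.
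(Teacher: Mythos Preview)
Your proof is correct and follows essentially the same approach as the paper's: both use the approximate-greedy inequality (at $\pi'=\pi_k$ for $b_k$ and $\pi'=\pi_*$ for $d_{k+1}$), the affinity $(T_\pi)^jf-(T_\pi)^jg=(\gamma P_\pi)^j(f-g)$, and the fixed-point/resolvent identity for $s_k$. The only cosmetic difference is that the paper isolates $g_{k+1}\=T_{\pi_{k+1}}v_k-(T_{\pi_{k+1}})^mv_k$ directly, whereas you write the full telescope $\sum_{j=0}^{m-1}(\gamma P_{\pi_{k+1}})^jb_k$ first and then peel off the $j=0$ term---the algebra is identical.
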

%
%
Since the stochastic kernels are non-negative, the bounds in Lemma~\ref{lemma:core} indicate that the loss $l_k$ will be bounded if the errors $\epsilon_k$ and $\epsilon'_k$ are controlled. In fact, if we define $\epsilon$ as a uniform upper-bound on the errors $|\epsilon_k|$ and $|\epsilon'_k|$, the first inequality in Lemma~\ref{lemma:core} implies that $b_k \leq O(\epsilon)$, and as a result, the second and third inequalities gives us $d_k \le O(\epsilon)$ and $s_k \le O(\epsilon)$. This means that the loss will also satisfy $l_k \le O(\epsilon)$.

Our bound for the loss $l_k$ is the result of careful expansion and combination of the three inequalities in Lemma~\ref{lemma:core}. Before we state this result, we introduce some notations that will ease our formulation.

\begin{definition}
\label{def:kernel-set}
For a positive integer $n$, we define $\setP_n$ as the set of transition kernels that are defined as follows:

{\bf 1)} for any set of $n$ policies $\{\pi_1,\ldots,\pi_n\}$, $(\gamma P_{\pi_1})(\gamma P_{\pi_2})\ldots (\gamma P_{\pi_n})\in\setP_n$,

{\bf 2)} for any $\alpha \in (0,1)$ and $(P_1,P_2)\in\setP_n\times\setP_n$, $\alpha P_1 + (1-\alpha) P_2 \in \setP_n$.

Furthermore, we use the somewhat abusive notation $\Gamma^n$ for denoting any element of $\setP_n$. For example, if we write a transition kernel $P$ as $P=\alpha_1\Gamma^i+\alpha_2\Gamma^j\Gamma^k=\alpha_1\Gamma^i+\alpha_2\Gamma^{j+k}$, it should be read as there exist $P_1\in\setP_i$, $P_2\in\setP_j$, $P_3\in\setP_k$, and $P_4\in\setP_{k+j}$ such that $P=\alpha_1P_1+\alpha_2P_2P_3=\alpha_1P_1+\alpha_2P_4$.
\end{definition}

Using the notation introduced in Definition~\ref{def:kernel-set}, we now derive a point-wise bound on the loss.
\begin{lemma}[Proof in \appdx{\ref{proof:lossAMPI}}]
\label{lemma:core2}
After $k$ iterations, the losses of AMPI-V and AMPI-Q satisfy
\begin{equation*}
\label{eq:pointwise}
l_k \leq 2\sum_{i=1}^{k-1}\sum_{j=i}^\infty \Gamma^j |\epsilon_{k-i}| + \sum_{i=0}^{k-1}\sum_{j=i}^\infty \Gamma^j |\epsilon'_{k-i}| + h(k),
\end{equation*}
while the loss of CBMPI satisfies
\begin{equation*}
l_k \leq 2\sum_{i=1}^{k-2}\sum_{j=i+m}^\infty \Gamma^{j} |\epsilon_{k-i-1}| + \sum_{i=0}^{k-1}\sum_{j=i}^\infty \Gamma^j |\epsilon'_{k-i}| + h(k),
\end{equation*}
where $h(k)\= 2 \sum_{j=k}^\infty \Gamma^j |d_0|$ or $h(k)\=2\sum_{j=k}^\infty \Gamma^j |b_0|$.
\end{lemma}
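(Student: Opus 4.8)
The plan is to unroll the three point-wise recursions of Lemma~\ref{lemma:core}, pass everything to absolute values (legitimate because the kernels $\Gamma^j$ are nonnegative, so they preserve point-wise inequalities), and then use the notation $\Gamma^j$ of Definition~\ref{def:kernel-set} — in particular its convexity property — to fuse all kernels of a common order into a single $\Gamma^j$. The key quantitative fact I would use throughout is that an element of $\setP_j$ applied to the constant function returns $\gamma^j$, so kernels of order $j$ carry geometric mass $\gamma^j$; this is what makes the infinite inner sums $\sum_{j=i}^\infty \Gamma^j$ converge and what lets the geometric tails telescope when combining terms.

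First I would expand the Bellman-residual recursion $b_k \leq (\gamma P_{\pi_k})^m b_{k-1} + x_k$. Iterating down to $b_0$ and writing each product of $i$ blocks $(\gamma P_{\pi_k})^m\cdots(\gamma P_{\pi_{k-i+1}})^m$ as $\Gamma^{im}$ gives
\begin{equation*}
b_k \leq \sum_{i=0}^{k-1} \Gamma^{im} x_{k-i} + \Gamma^{km} b_0 .
\end{equation*}
Passing to absolute values and bounding $|x_k| \leq (I + \gamma P_{\pi_k})|\epsilon_k| + |\epsilon'_{k+1}| = \Gamma^0|\epsilon_k| + \Gamma^1|\epsilon_k| + |\epsilon'_{k+1}|$ turns this into a nonnegative combination of the $|\epsilon|$'s, $|\epsilon'|$'s and $|b_0|$ weighted by kernels of explicit orders. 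I would then substitute this expansion into the two remaining quantities: the shift, via the Neumann series $s_k = (\gamma P_{\pi_k})^m(I-\gamma P_{\pi_k})^{-1}b_{k-1} = \sum_{j=m}^\infty \Gamma^j b_{k-1}$, and the distance $d_k$ obtained by unrolling $d_{k+1}\leq \gamma P_{\pi_*}d_k + y_k + \sum_{j=1}^{m-1}\Gamma^j b_k$ down to its initial term; the latter, together with the $\Gamma^{km}b_0$ tail from the $b$-expansion, furnishes the $h(k)$ contribution (expressible through either $|d_0|$ or $|b_0|$).

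Then I form $l_k = d_k + s_k$ and, for each fixed $i$, collect the total kernel acting on $|\epsilon_{k-i}|$ and on $|\epsilon'_{k-i}|$. Each $|\epsilon_{k-i}|$ enters along two routes — directly through the $-\gamma P_{\pi_*}\epsilon$ part of $y$, and through $x$ into every $b$-term of both $d$ and $s$ — and after merging equal-order kernels by the convexity property and re-summing the geometric tails, the coefficient of $|\epsilon_{k-i}|$ is bounded by $2\sum_{j=i}^\infty \Gamma^j$ and that of $|\epsilon'_{k-i}|$ by $\sum_{j=i}^\infty \Gamma^j$, which is exactly the AMPI-V/AMPI-Q claim. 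The CBMPI bound then follows from the reduction recorded before the lemma: replacing the evaluation error $\epsilon_k$ by $(\gamma P_{\pi_k})^m\epsilon_{k-1} = \Gamma^m\epsilon_{k-1}$ multiplies each $\epsilon$-coefficient by $\Gamma^m$ (shifting the inner sum from $j=i$ to $j=i+m$) and relabels $\epsilon_{k-i}$ as $\epsilon_{k-i-1}$, while the fact that this effective error vanishes at the first step trims the outer limit to $k-2$; the $\epsilon'$ and $h(k)$ terms are untouched.

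The main obstacle is the bookkeeping in this collection step: one must track precisely how a single $\epsilon_{k-i}$ propagates through the nested sums — it enters $b_{k-i}$, which is then hit both by the finite $\sum_{j=1}^{m-1}\Gamma^j$ in the $d$-recursion and by the infinite $\sum_{j\geq m}\Gamma^j$ in $s$, and it appears separately in $y$ — and then verify that, after repeatedly fusing kernels of equal order via the convexity of $\setP_n$ and accounting for their geometric masses, the aggregate coefficient is no larger than $2\sum_{j=i}^\infty\Gamma^j$ with exactly the stated summation ranges rather than a looser bound. Pinning the constant to exactly $2$ is the delicate part; it reflects that each $\epsilon_{k-i}$ contributes once through the component $d_k$ and once through $s_k$ of the loss $l_k = d_k + s_k$.
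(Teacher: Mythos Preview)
Your plan is correct and is essentially the paper's own proof: unroll the three recursions of Lemma~\ref{lemma:core}, plug the $b$-expansion into both $d_k$ and $s_k$, add, and then replace $x_k,y_k$ by their bounds in terms of $|\epsilon|,|\epsilon'|$; the CBMPI variant is obtained exactly by the substitution $\epsilon_k\mapsto\Gamma^m\epsilon_{k-1}$ you describe.

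One correction to your intuition, since you flag it as the delicate point: the constant $2$ in front of $\sum_{j\ge i}\Gamma^j|\epsilon_{k-i}|$ does \emph{not} come from ``once through $d_k$, once through $s_k$.'' What actually happens is that the $x_{k-i}$-coefficient coming from $d_k$ is the finite block $\sum_{j=i}^{mi-1}\Gamma^j$ while the one coming from $s_k$ is the tail $\sum_{j\ge mi}\Gamma^j$; these dovetail into a single $\sum_{j\ge i}\Gamma^j$ acting on $x_{k-i}$. The doubling then arises when you bound $|x_{k-i}|\le(I+\Gamma^1)|\epsilon_{k-i}|+|\epsilon'_{k-i+1}|$, which yields $\sum_{j\ge i}\Gamma^j|\epsilon_{k-i}|+\sum_{j\ge i+1}\Gamma^j|\epsilon_{k-i}|$; the lone $y$-contribution $\Gamma^{i-1}\cdot\Gamma^1|\epsilon_{k-i}|=\Gamma^i|\epsilon_{k-i}|$ supplies exactly the missing $j=i$ term of the second sum, giving $2\sum_{j\ge i}\Gamma^j|\epsilon_{k-i}|$. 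Keeping this mechanism in mind will make the bookkeeping go through with the exact constant rather than a looser one.
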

%
%
\begin{remark}
\label{rem:unifyc}
A close look at the existing point-wise error bounds for AVI~\citep[Lemma~4.1]{Munos:2007} and API~\citep[Corollary 10]{Munos:2003} shows that they do not consider error in the greedy step (i.e.,~$\epsilon'_k=0$) and that they have the following form:
\begin{equation*}
{\lim\sup}_{k \rightarrow \infty} l_k \le 2\;{\lim\sup}_{k \rightarrow \infty} \sum_{i=1}^{k-1} \sum_{j=i}^\infty \pp{j}|\epsilon_{k-i}|.
\end{equation*}
This indicates that the bound in Lemma~\ref{lemma:core2} not only unifies the analysis of AVI and API, but it generalizes them to the case of error in the greedy step and to a finite horizon $k$. Moreover, our bound suggests that the way the errors are propagated in the whole family of algorithms VI/PI/MPI does not depend on $m$ at the level of the abstraction suggested by Definition~\ref{def:kernel-set}.\footnote{Note however that the dependence on $m$ will reappear if we make explicit what is hidden in the terms $\Gamma^j$.}
\end{remark}
The next step is to show how the point-wise bound of Lemma~\ref{lemma:core2} can turn to a bound in weighted $L_p$-norm, which for any function $f:\S \rightarrow \R$ and any distribution $\mu$ on $\S$ is defined as $\|f\|_{p,\mu}\=\left(\int |f(x)|^p \mu(dx)\right)^{1/p}$. \citet{Munos:2003,Munos:2007,Munos:2008}, and the recent work of~\citet{Farahmand:2010}, which provides the most refined bounds for API and AVI, show how to do this process
through quantities, called {\em concentrability coefficients}, that measure how a distribution over states may concentrate through the dynamics of the MDP. We now state a lemma that generalizes the analysis of~\citet{Farahmand:2010} to a larger class of concentrability coefficients. We will discuss the potential advantage of this new class in Remark~\ref{rem:qqp}. We will also show through the proofs of Thms.~\ref{lpbound} and~\ref{alternativebound}, how the result of Lemma~\ref{lemma:fromctolp} provides us with a flexible tool for turning point-wise bounds into $L_p$-norm bounds. Thm.~\ref{alternativebound} in \appdx{\ref{proof:lpbound}} provides an alternative bound for the loss of AMPI, which in analogy with the results of~\citet{Farahmand:2010} shows that the last iterations have the highest impact on the loss (the influence exponentially decreases towards the initial iterations).  
\begin{lemma}[Proof in \appdx{\ref{proof:concenter}}]
\label{lemma:fromctolp}
Let $\cal I$ and $({\cal J}_i)_{i \in {\cal I}}$ be sets of positive integers, $\{{\cal I}_1,\dots,{\cal I}_n\}$ be a partition of $\cal I$, and $f$ and $(g_i)_{i \in {\cal I}}$ be functions satisfying 
\begin{equation*}
|f| \le \sum_{i \in \cal{I}} \sum_{j \in {\cal J}_{i}} \pp{j} |g_i| = \sum_{l=1}^n\sum_{i \in {\cal{I}}_l} \sum_{j \in {\cal J}_i} \pp{j} |g_i|.
\end{equation*}
Then for all $p$, $q$ and $q'$ such that $\frac{1}{q}+\frac{1}{q'}=1$, and for all distributions $\rho$ and $\mu$, we have
\begin{equation*}
\|f\|_{p,\rho} \le \sum_{l=1}^n \big({\cal C}_{q}(l)\big)^{1/p} \sup_{i \in {\cal I}_l}  \|g_{i}\|_{pq',\mu} \sum_{i \in {\cal I}_l} \sum_{j \in {\cal J}_{i}} \gamma^{j}, 
\end{equation*}
with the following concentrability coefficients
\begin{small}
\begin{equation*}
{\cal C}_{q}(l) \= \frac{\sum_{i \in {\cal I}_l} \sum_{j \in {\cal J}_{i}} \gamma^{j} c_q(j) }{ \sum_{i \in {\cal I}_l} \sum_{j \in {\cal J}_{i}} \gamma^{j}},
\end{equation*}
\end{small}
%
with the Radon-Nikodym derivative based quantity 
\begin{small}
\begin{equation}
c_q(j) \= \max_{\pi_1,\cdots,\pi_j}\left\| \frac{d (\rho P_{\pi_1}P_{\pi_2}\cdots P_{\pi_j})}{d \mu} \right\|_{q,\mu} \label{eq:defconcentrability}
\end{equation}

\end{small}
\end{lemma}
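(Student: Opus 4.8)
The plan is to pass from the pointwise inequality to the weighted $L_p$-norm bound in two stages: first split the right-hand side according to the partition $\{{\cal I}_1,\dots,{\cal I}_n\}$ with Minkowski's inequality, then bound each block separately by a careful double use of Jensen's inequality followed by H\"older (throughout I assume $p\ge 1$, the standard regime). Since every $\Gamma^j\in\setP_j$ maps nonnegative functions to nonnegative functions and each $|g_i|\ge 0$, the right-hand side of the hypothesis is nonnegative, so monotonicity of the $L_p$-norm gives $\|f\|_{p,\rho}\le\big\|\sum_{l=1}^n f_l\big\|_{p,\rho}$ where $f_l\=\sum_{i\in{\cal I}_l}\sum_{j\in{\cal J}_i}\Gamma^j|g_i|$. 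Minkowski's inequality then yields $\|f\|_{p,\rho}\le\sum_{l=1}^n\|f_l\|_{p,\rho}$, reducing the problem to showing, for each $l$, that $\|f_l\|_{p,\rho}\le S_l\,({\cal C}_q(l))^{1/p}\sup_{i\in{\cal I}_l}\|g_i\|_{pq',\mu}$, where $S_l\=\sum_{i\in{\cal I}_l}\sum_{j\in{\cal J}_i}\gamma^j$.

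Next I would bound a single block. By Definition~\ref{def:kernel-set}, each $\Gamma^j$ equals $\gamma^j P^{(j)}$ for a stochastic kernel $P^{(j)}$ that is a convex combination of $j$-fold products $P_{\pi_1}\cdots P_{\pi_j}$. Introducing the weights $w_{ij}\=\gamma^j/S_l$, which sum to one over the pairs $(i,j)$ with $i\in{\cal I}_l,\,j\in{\cal J}_i$, I get $f_l/S_l=\sum_{i,j}w_{ij}\,P^{(j)}|g_i|$, a convex combination. Applying Jensen's inequality twice --- once over the convex weights $w_{ij}$ and once over the Markov operator $P^{(j)}$, both using convexity of $t\mapsto t^p$ --- gives the pointwise bound $(f_l/S_l)^p\le\sum_{i,j}w_{ij}\,P^{(j)}|g_i|^p$.

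I would then integrate against $\rho$, use $\int (P^{(j)}|g_i|^p)\,d\rho=\int|g_i|^p\,d(\rho P^{(j)})$, and apply H\"older with conjugate exponents $q,q'$:
\begin{equation*}
\int|g_i|^p\,d(\rho P^{(j)})=\int|g_i|^p\,\frac{d(\rho P^{(j)})}{d\mu}\,d\mu\le\Big\|\tfrac{d(\rho P^{(j)})}{d\mu}\Big\|_{q,\mu}\,\|g_i\|_{pq',\mu}^p.
\end{equation*}
Because $P^{(j)}$ is a convex combination of products $P_{\pi_1}\cdots P_{\pi_j}$, the derivative $d(\rho P^{(j)})/d\mu$ is the same convex combination of the derivatives $d(\rho P_{\pi_1}\cdots P_{\pi_j})/d\mu$, so the triangle inequality bounds its $\|\cdot\|_{q,\mu}$-norm by $\max_{\pi_1,\dots,\pi_j}\|d(\rho P_{\pi_1}\cdots P_{\pi_j})/d\mu\|_{q,\mu}=c_q(j)$. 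Collecting factors, $\int(f_l/S_l)^p\,d\rho\le\sup_{i\in{\cal I}_l}\|g_i\|_{pq',\mu}^p\sum_{i,j}w_{ij}\,c_q(j)$, and since $\sum_{i,j}w_{ij}\,c_q(j)=S_l^{-1}\sum_{i\in{\cal I}_l}\sum_{j\in{\cal J}_i}\gamma^j c_q(j)={\cal C}_q(l)$, taking the $p$-th root and multiplying by $S_l$ gives the per-block bound; summing over $l$ finishes the argument.

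The main obstacle is the careful handling of the abstract kernels: one must invoke Definition~\ref{def:kernel-set} to factor each $\Gamma^j$ as $\gamma^j$ times a genuine stochastic kernel, verify that this kernel is a convex combination of $j$-fold policy products so that both the kernel-level Jensen step and the convexity bound by $c_q(j)$ are legitimate, and track the $\gamma^j$ normalizations so that the $w_{ij}=\gamma^j/S_l$ form a probability vector and the final constant collapses exactly to ${\cal C}_q(l)$. The two-fold Jensen argument (discrete weights and Markov operator) is where the power $p$ must be pushed inside both averaging operations at once, and keeping the exponents consistent in the subsequent H\"older step is the other delicate point.
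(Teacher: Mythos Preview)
Your argument is correct and uses the same core ingredients as the paper's proof: write each $\Gamma^j$ as $\gamma^j$ times a stochastic kernel, push the power $p$ inside via Jensen, and control $\int |g_i|^p\,d(\rho P^{(j)})$ by $c_q(j)\,\|g_i\|_{pq',\mu}^p$ via H\"older. The organizational choice, however, differs. You first decouple the blocks ${\cal I}_l$ with Minkowski's inequality and then run the Jensen/H\"older argument on each block separately, with the normalized weights $w_{ij}=\gamma^j/S_l$. The paper does not invoke Minkowski; instead it introduces auxiliary nonnegative scalars $\xi_l$, writes the whole right-hand side as a single convex combination with weights $\xi_l\gamma^j/K$ where $K=\sum_l\xi_l S_l$, applies Jensen once globally, and only at the end chooses $\xi_l=({\cal C}_q(l))^{1/p}\sup_{i\in{\cal I}_l}\|g_i\|_{pq',\mu}$ to collapse the bound to $\|f\|_{p,\rho}\le K$. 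Both routes yield exactly the same final inequality; your Minkowski-first version is arguably more transparent, while the paper's single-Jensen-with-free-parameters version is slightly more compact and makes explicit that the partition bound is the optimal choice of $\xi_l$ within that family.
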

%
%
We now derive a $L_p$-norm bound for the loss of the AMPI algorithm by applying Lemma~\ref{lemma:fromctolp} to the point-wise bound of Lemma~\ref{lemma:core2}.
\begin{theorem}[Proof in \appdx{\ref{proof:lpbound}}]
\label{lpbound} 
Let $\rho$ and $\mu$ be distributions over states. Let $p$, $q$, and $q'$ be such that $\frac{1}{q}+\frac{1}{q'}=1$. After $k$ iterations, the loss of AMPI satisfies
\begin{small}
\begin{align}
\|l_k \|_{p,\rho} &\leq \frac{2 (\gamma-\gamma^k) \left( {\cal C}_{q}^{1,k,0} \right)^{\frac{1}{p}}}{(1-\gamma)^2}  \sup_{1 \le j \le k-1}\|\epsilon_j\|_{pq',\mu} \label{eq:thm1} \\ 
&+\frac{(1-\gamma^k)\left( {\cal C}_{q}^{0,k,0} \right)^{\frac{1}{p}}}{(1-\gamma)^2} \sup_{1 \le j \le k}\|\epsilon'_j\|_{pq',\mu}  + g(k), \nonumber
\end{align}
\end{small}
while the loss of CBMPI satisfies
\begin{small}
\begin{align}
\|l_k \|_{p,\rho} \leq \frac{2\gamma^m (\gamma-\gamma^{k-1}) \left( {\cal C}_{q}^{2,k,m} \right)^{\frac{1}{p}}}{(1-\gamma)^2}  \sup_{1 \le j \le k-2}\|\epsilon_j\|_{pq',\mu} \label{eq:thm1b}\\
+\frac{(1-\gamma^k)\left( {\cal C}_{q}^{1,k,0} \right)^{\frac{1}{p}}}{(1-\gamma)^2} \sup_{1 \le j \le k}\|\epsilon'_j\|_{pq',\mu}  + g(k), \nonumber
\end{align}
\end{small}
where for all $q$, $l$, $k$ and $d$, the concentrability coefficients ${\cal C}_{q}^{l,k,d}$ are defined as
\begin{small}
\begin{equation*}
{\cal C}_{q}^{l,k,d} \= \frac{(1-\gamma)^2}{\gamma^{l}-\gamma^{k}} \sum_{i=l}^{k-1}  \sum_{j=i}^\infty \gamma^{j} c_{q}(j+d),
\end{equation*}
\end{small}
with $c_q(j)$ given by Eq.~\ref{eq:defconcentrability}, and $g(k)$ is defined as $g(k) \=\frac{2\gamma^k}{1-\gamma}\left({\cal C}^{k,k+1}_{q}\right)^{\frac{1}{p}}\min\big(\|d_0\|_{pq',\mu},\|b_0\|_{pq',\mu}\big)$. 
\end{theorem}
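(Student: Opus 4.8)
The plan is to push the point-wise bounds of Lemma~\ref{lemma:core2} through the norm-conversion result of Lemma~\ref{lemma:fromctolp}. The first thing I would record is that the loss is non-negative, $l_k = v_* - v_{\pi_k} \ge 0$, so that $\|l_k\|_{p,\rho} = \| |l_k| \|_{p,\rho}$; and that every right-hand side in Lemma~\ref{lemma:core2} is a non-negative combination of $\Gamma^j$-images of the absolute errors $|\epsilon_{\cdot}|$, $|\epsilon'_{\cdot}|$ and of $|d_0|$ or $|b_0|$ (the non-negativity of the kernels in Definition~\ref{def:kernel-set} is exactly what lets me keep the absolute values outside the $\Gamma^j$). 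Hence $\|l_k\|_{p,\rho}$ is dominated by the $L_p(\rho)$-norm of the point-wise bound, and by the triangle inequality I can treat the three groups of terms separately—the evaluation errors $\epsilon$, the greedy errors $\epsilon'$, and the initial term—pulling the scalar $2$ out of the relevant norms. The free parameters $p,q,q'$ and the distributions $\rho,\mu$ are precisely those of Lemma~\ref{lemma:fromctolp}, so nothing extra is introduced.

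For AMPI I would then invoke Lemma~\ref{lemma:fromctolp} once per group, always choosing the partition to be a single block so that the factor $\sup_{i \in {\cal I}_l}\|g_i\|_{pq',\mu}$ collapses to a supremum over iterations. For the evaluation-error group I take ${\cal I}=\{1,\dots,k-1\}$, $g_i=\epsilon_{k-i}$ and ${\cal J}_i=\{i,i+1,\dots\}$: the geometric sum $\sum_{i=1}^{k-1}\sum_{j\ge i}\gamma^j = \tfrac{\gamma-\gamma^k}{(1-\gamma)^2}$ supplies the prefactor, the supremum becomes $\sup_{1\le j\le k-1}\|\epsilon_j\|_{pq',\mu}$, and the $\gamma^j$-weighted average of $c_q(j)$ produced by the lemma is exactly ${\cal C}_q^{1,k,0}$. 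The greedy-error group (${\cal I}=\{0,\dots,k-1\}$, ${\cal J}_i=\{i,\dots\}$) yields in the same way the prefactor $\tfrac{1-\gamma^k}{(1-\gamma)^2}$ and the coefficient ${\cal C}_q^{0,k,0}$. Together these reproduce the bound~\eqref{eq:thm1}.

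The CBMPI bound is handled identically, the only new ingredient being the shift by $m$. Since CBMPI's abstract evaluation error is $(\gamma P_{\pi_k})^m \epsilon_{k-1}$ and $\Gamma^j(\gamma P_{\pi})^m=\Gamma^{j+m}$, its evaluation terms carry exponents $j\ge i+m$; taking ${\cal J}_i=\{i+m,i+m+1,\dots\}$, the geometric sum factors out an extra $\gamma^m$ and gives $\tfrac{\gamma^m(\gamma-\gamma^{k-1})}{(1-\gamma)^2}$, while reindexing $j\mapsto j+m$ converts $c_q(j)$ into $c_q(j+m)$, i.e. the argument shift $d=m$ in the coefficient ${\cal C}_q^{\cdot,\cdot,m}$. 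The greedy-error group is the same as for AMPI. For the initial term the single family is ${\cal J}=\{k,k+1,\dots\}$, so $\sum_{j\ge k}\gamma^j=\tfrac{\gamma^k}{1-\gamma}$ and the weighted average of $c_q(j)$ is ${\cal C}_q^{k,k+1}$; applying Lemma~\ref{lemma:fromctolp} to both admissible forms of $h(k)$ and keeping the smaller of the two bounds produces the $\min\big(\|d_0\|_{pq',\mu},\|b_0\|_{pq',\mu}\big)$ appearing in $g(k)$, giving~\eqref{eq:thm1b}.

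I expect the only genuine difficulty to be the index bookkeeping rather than any analytic step. One must choose the partition $\{{\cal I}_1,\dots,{\cal I}_n\}$ and the families $\{{\cal J}_i\}$ so that the nested abstract sums of Lemma~\ref{lemma:fromctolp} reproduce exactly the nested sums of Lemma~\ref{lemma:core2}, and then carry out the reindexing that turns the $m$-shifted exponents into the stated ${\cal C}_q^{l,k,d}$ and aligns the summation ranges with the definition $\tfrac{(1-\gamma)^2}{\gamma^l-\gamma^k}\sum_{i=l}^{k-1}\sum_{j\ge i}\gamma^j c_q(j+d)$, where the off-by-one between the outer range $\{1,\dots,k-2\}$ coming from the point-wise bound and the range in the coefficient must be tracked carefully. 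Once the geometric series $\sum_i\sum_{j\ge i}\gamma^j$ and $\sum_{j\ge k}\gamma^j$ are evaluated and the resulting $\gamma^j$-weighted averages of $c_q$ are identified with the concentrability coefficients, the two displayed bounds follow directly.
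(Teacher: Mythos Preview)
Your proposal is correct and follows essentially the same route as the paper. The paper encodes the three groups of terms (evaluation errors, greedy errors, initial term) as a single index set ${\cal I}=\{1,\dots,2k\}$ with the three-block partition ${\cal I}_1=\{1,\dots,k-1\}$, ${\cal I}_2=\{k,\dots,2k-1\}$, ${\cal I}_3=\{2k\}$ and applies Lemma~\ref{lemma:fromctolp} once, whereas you split first by the triangle inequality and then apply Lemma~\ref{lemma:fromctolp} three times with trivial partitions; since the output of Lemma~\ref{lemma:fromctolp} is additive over partition blocks, both produce exactly the same bound, and the remaining work---evaluating $\sum_{i=i_0}^{k-1}\sum_{j\ge i}\gamma^j=(\gamma^{i_0}-\gamma^k)/(1-\gamma)^2$ and identifying the resulting averages with the ${\cal C}_q^{l,k,d}$---is the same bookkeeping you describe.
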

%
%
\begin{remark}
\label{recoveraviapi}
When $p$ tends to infinity, the first bound of Thm.~\ref{lpbound} reduces to

\vspace{-0.3in}
\begin{small}
\begin{align}
\|l_k \|_{\infty} &\le \frac{2 (\gamma-\gamma^k)}{(1-\gamma)^2}  \sup_{1 \le j \le k-1}\|\epsilon_j\|_{\infty}+ \frac{1-\gamma^k}{(1-\gamma)^2}\sup_{1 \le j \le k}\|\epsilon'_j\|_{\infty} \nonumber \\
&+\frac{2\gamma^k}{1-\gamma} \min (\|d_0\|_{\infty},\|b_0\|_{\infty}). \label{eq:remark2}
\end{align}
\end{small}
\vspace{-0.2in}

When $k$ goes to infinity, Eq.~\ref{eq:remark2} gives us a generalization of the API ($m=\infty$) bound of~\citet[Prop.~6.2]{Bertsekas:1996}, i.e.,

\vspace{-0.1in}
\begin{small}
\begin{equation*}
\limsup_{k \rightarrow \infty} \|l_k \|_{\infty} \le \frac{2 \gamma \sup_{j}\|\epsilon_j\|_{\infty}  + \sup_{j}\|\epsilon'_j\|_{\infty}}{(1-\gamma)^2}.
\end{equation*}
\end{small}
\vspace{-0.2in}

Moreover, since our point-wise analysis generalizes those of API and AVI (as noted in Remark~\ref{rem:unifyc}), the $L_p$-bound of Eq.~\ref{eq:thm1} unifies and generalizes those for API~\cite{Munos:2003} and AVI~\cite{Munos:2007}.
\end{remark}
\begin{remark}
\citet{canbolat2012} recently (and independently) developped an analysis of an approximate form of MPI. Also, as mentionned, the proof technique that we used is based on that of \citet{scherrer2010}.
While \citet{canbolat2012} only consider the error in the greedy step and \citet{scherrer2010} that in the value update, our work is more general in that we consider \emph{both} sources of error -- this is required for the analysis of CBMPI. \citet{scherrer2010} and \citet{canbolat2012} provide bounds when the errors are controlled in max-norm, while we consider the more general $L_p$-norm.
At a more technical level, Th.~2 in \citep{canbolat2012} bounds the norm of the distance $v_*-v_k$ while we bound the loss $v_*-v_{\pi_k}$. If we derive a bound on the loss (using e.g., Th.~1 in \citep{canbolat2012}), this leads to a bound on the loss  that is looser than ours. In particular, this does not allow to recover the standard bounds for AVI/API, as we managed to (c.f. Remark~\ref{recoveraviapi}).
\end{remark}
\begin{remark}
\label{rem:qqp} 
We can balance the influence of the concentrability coefficients (the bigger the $q$, the higher the influence) and the difficulty of controlling the errors (the bigger the $q'$, the greater the difficulty in controlling the $L_{pq'}$-norms) by tuning the parameters $q$ and $q'$, given the condition that $\frac{1}{q}+\frac{1}{q'}=1$. This potential leverage is an improvement over the existing bounds and concentrability results that only consider specific values of these two parameters: $q=\infty$ and $q'=1$ in \citet{Munos:2007,Munos:2008}, and $q=q'=2$ in \citet{Farahmand:2010}.
\end{remark}
\begin{remark}\label{r:m}
For CBMPI, the parameter $m$ controls the influence of the value function approximator, cancelling it out in the limit when $m$ tends to infinity (see Eq.~\ref{eq:thm1b}). Assuming a fixed budget of sample transitions, increasing $m$ reduces the number of rollouts used by the classifier and thus worsens its quality; in such a situation, $m$ allows to make a trade-off between the estimation error of the classifier and the overall value function approximation. 
\end{remark}
%


\section{Finite-Sample Analysis of CBMPI}
\label{cbmpi}

In this section, we focus on CBMPI and detail the possible form of the error terms that appear in the bound of Thm.~\ref{lpbound}. 
We select CBMPI among the proposed algorithms because its analysis is more general than the others as we need to bound both greedy and evaluation step errors (in some norm), 
and also because it displays an interesting influence of the parameter $m$ (\cf Remark~\ref{r:m}).
%
%
We first provide a bound on the {\em greedy step error}. From the definition of $\epsilon'_k$ for CBMPI (Eq.~\ref{eq:def_error2_cbmpi}) and the description of the greedy step in CBMPI, we can easily observe that $\|\epsilon'_k\|_{1,\mu}=\L_{k-1}^{\Pi}(\mu;\pi_k)$.
\begin{lemma}[Proof in \appdx{\ref{proof:actorCBMPI}}]
\label{l:actorCBMPI} 
Let $\polSpace$ be a policy space with finite VC-dimension $h=VC(\polSpace)$ and $\mu$ be a distribution over the state space $\S$. Let $N$ be the number of states in $\D'_{k-1}$ drawn i.i.d.~from $\mu$, $M$ be the number of rollouts per state-action pair used in the estimation of $\hQ_{k-1}$, and $\pi_k = \argmin_{\pi\in\Pi}\widehat{\L}_{k-1}^\Pi(\widehat{\mu},\pi)$ be the policy computed at iteration $k-1$ of CBMPI. Then, for any $\delta>0$, we have 
\begin{align*}
\|\epsilon'_k\|_{1,\mu}=\L_{k-1}^{\Pi}(\mu;\pi_k) \leq \inf_{\pi\in\Pi}\L^\Pi_{k-1}(\mu;\pi) + 2(\epsilon'_1+\epsilon'_2),
\end{align*}
with probability at least $1-\delta$, where
\begin{small}
\begin{align*}
\epsilon'_1(N,\delta) &= 16\Qmax\sqrt{\frac{2}{N}\big(h\log\frac{eN}{h}+\log\frac{32}{\delta}\big)}\;, \\
\epsilon'_2(N,M,\delta) &= 8\Qmax\sqrt{\frac{2}{MN}\big(h\log\frac{eMN}{h} + \log\frac{32}{\delta}\big)}\;.
\end{align*}
\end{small}
\end{lemma}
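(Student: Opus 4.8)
The plan is to use the standard agnostic-learning decomposition that bounds the excess error of an empirical risk minimizer by twice a uniform deviation, and then to split that deviation according to its two independent sources of randomness (the finite number $N$ of sampled states and the finite number $M$ of rollouts per state--action pair). Let $\pi^*$ be a minimizer (or, passing to the limit, an $\eta$-minimizer) of $\L_{k-1}^\Pi(\mu;\cdot)$ over $\Pi$. Writing the excess error as a telescoping sum,
\begin{align*}
\L_{k-1}^\Pi(\mu;\pi_k)-\inf_{\pi\in\Pi}\L_{k-1}^\Pi(\mu;\pi)
&=\big[\L_{k-1}^\Pi(\mu;\pi_k)-\widehat{\L}_{k-1}^\Pi(\widehat{\mu};\pi_k)\big]\\
&\quad+\big[\widehat{\L}_{k-1}^\Pi(\widehat{\mu};\pi_k)-\widehat{\L}_{k-1}^\Pi(\widehat{\mu};\pi^*)\big]\\
&\quad+\big[\widehat{\L}_{k-1}^\Pi(\widehat{\mu};\pi^*)-\L_{k-1}^\Pi(\mu;\pi^*)\big],
\end{align*}
the middle bracket is nonpositive because $\pi_k$ minimizes the empirical error $\widehat{\L}_{k-1}^\Pi(\widehat{\mu};\cdot)$, and each of the two remaining brackets is at most $\sup_{\pi\in\Pi}\big|\L_{k-1}^\Pi(\mu;\pi)-\widehat{\L}_{k-1}^\Pi(\widehat{\mu};\pi)\big|$. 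This produces the factor $2$ in front and reduces the lemma to controlling a single uniform deviation.

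Next I would insert the auxiliary empirical loss that evaluates the \emph{exact} action-value function $Q_{k-1}$ on the sampled states,
\begin{equation*}
\widetilde{\L}(\pi)\=\frac{1}{N}\sum_{i=1}^N\Big[\max_{a\in\A}Q_{k-1}(s^{(i)},a)-Q_{k-1}\big(s^{(i)},\pi(s^{(i)})\big)\Big],
\end{equation*}
so that $\big|\L_{k-1}^\Pi(\mu;\pi)-\widehat{\L}_{k-1}^\Pi(\widehat{\mu};\pi)\big|\le\big|\L_{k-1}^\Pi(\mu;\pi)-\widetilde{\L}(\pi)\big|+\big|\widetilde{\L}(\pi)-\widehat{\L}_{k-1}^\Pi(\widehat{\mu};\pi)\big|$. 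The first piece is a pure state-sampling error: it compares the expectation under $\mu$ with the empirical average over the $N$ i.i.d.~states of the bounded map $s\mapsto\max_a Q_{k-1}(s,a)-Q_{k-1}(s,\pi(s))\in[0,2\Qmax]$, and a uniform-over-$\Pi$ convergence inequality whose growth function is controlled through $VC(\Pi)=h$ (so that the number of realizable label patterns on $N$ points is $O((eN/h)^h)$ by Sauer's lemma) yields exactly $\epsilon'_1(N,\delta)$. The second piece is the rollout-estimation error, which I bound using that $\hQ_{k-1}(s,a)=\frac{1}{M}\sum_{j}R_{k-1}^j(s,a)$ is an average of $M$ i.i.d.~returns of magnitude at most $\Qmax$ with mean $Q_{k-1}(s,a)$; a Hoeffding-type concentration combined with the same growth-function argument gives $\epsilon'_2(N,M,\delta)$. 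Assigning failure probability $\delta/2$ to each piece, and folding the two-sided union and the constant prefactor of the VC inequality into the numbers, is what produces the $\log\frac{32}{\delta}$ terms and the leading constants $16\Qmax$ and $8\Qmax$.

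The delicate step is this second, rollout-estimation, deviation bound, because $\widehat{\L}_{k-1}^\Pi(\widehat{\mu};\pi)$ depends nonlinearly (through $\max_a$ and through the selected action $\pi(s^{(i)})$) on the random estimates $\hQ_{k-1}(s^{(i)},a)$, which are moreover reused across all candidate classifiers $\pi$. I would control $\widetilde{\L}(\pi)-\widehat{\L}_{k-1}^\Pi(\widehat{\mu};\pi)$ by bounding $\big|\max_a Q_{k-1}-\max_a\hQ_{k-1}\big|\le\max_a\big|Q_{k-1}-\hQ_{k-1}\big|$ and by writing the selected-action term as $\sum_a\mathds{1}\{\pi(s^{(i)})=a\}\,\big(Q_{k-1}(s^{(i)},a)-\hQ_{k-1}(s^{(i)},a)\big)$, so that the $\pi$-dependence enters only through the indicators $\mathds{1}\{\pi(s^{(i)})=a\}$. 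Since $\widehat{\L}_{k-1}^\Pi(\widehat{\mu};\pi)$ is then a double average over the $MN$ rollout outcomes, the concentration of $\hQ_{k-1}-Q_{k-1}$ taken uniformly over state--action pairs has effective sample size $MN$, which is precisely why $\epsilon'_2$ carries $\frac{2}{MN}$ and the complexity term $h\log\frac{eMN}{h}$ rather than the $N$ of $\epsilon'_1$. Tracking these two reductions separately and keeping the correct ranges ($2\Qmax$ for the loss and for a return difference) is what fixes the constants; the remaining manipulations are routine applications of the uniform deviation inequality used for classification-based policy iteration by~\citet{Lazaric:2010}.
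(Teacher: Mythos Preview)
Your overall strategy—the ERM/agnostic decomposition followed by a split into state-sampling and rollout-estimation errors—is exactly the route the paper takes (it cites the same inequalities from \citet{Lazaric:2010}). There is, however, one subtle gap in the rollout-estimation step as you wrote it.

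By passing to the \emph{supremum of the absolute value} $\sup_{\pi}\big|\L_{k-1}^\Pi(\mu;\pi)-\widehat{\L}_{k-1}^\Pi(\widehat\mu;\pi)\big|$ and then to $\sup_\pi|\widetilde{\L}(\pi)-\widehat{\L}_{k-1}^\Pi(\widehat\mu;\pi)|$, you are forced to control the $\pi$-independent term
\[
A \;=\; \frac{1}{N}\sum_{i=1}^N\Big[\max_a \hQ_{k-1}(s^{(i)},a)-\max_a Q_{k-1}(s^{(i)},a)\Big].
\]
This term is \emph{not} a double average of $MN$ centered rollouts: because $\E[\max_a \hQ]\ge \max_a Q$ (Jensen), $A$ has a positive bias of order $\Qmax/\sqrt{M}$, not $\Qmax/\sqrt{MN}$, so it does not fit into the stated $\epsilon'_2(N,M,\delta)$. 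Your sentence ``the concentration of $\hQ_{k-1}-Q_{k-1}$ taken uniformly over state--action pairs has effective sample size $MN$'' is correct for the selected-action piece $\sum_a \mathds 1\{\pi=a\}(\hQ-Q)$ (that is precisely the paper's Lemma~\ref{l:vc-bound2}), but it does not cover the $\max$ piece.

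The paper's proof avoids this by \emph{never} replacing $\max_a Q_{k-1}$ by $\max_a \hQ_{k-1}$. It observes that, since the $\max$ part of $\widehat{\L}_{k-1}^\Pi$ does not depend on $\pi$, the minimizer $\pi_k$ of $\widehat{\L}_{k-1}^\Pi$ is equally the maximizer of $\frac{1}{N}\sum_i \hQ_{k-1}(s^{(i)},\pi(s^{(i)}))$. Hence the chain of inequalities can use the ``mixed'' loss $\frac{1}{N}\sum_i[\max_a Q_{k-1}(s^{(i)},a)-\hQ_{k-1}(s^{(i)},\pi(s^{(i)}))]$ at the comparison step, and only the selected-action concentration (your $\epsilon'_2$) is needed. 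Equivalently, in your own telescoping sum, if you keep the signed brackets instead of taking $\sup_\pi|\cdot|$, the term $A$ appears with opposite signs in the $\pi_k$-bracket and the $\pi^*$-bracket and cancels exactly; the remaining terms are then bounded by two applications of Lemma~\ref{l:vc-bound} (giving $2\epsilon'_1$) and two applications of Lemma~\ref{l:vc-bound2} (giving $2\epsilon'_2$), matching the $\log\frac{32}{\delta}$ after setting $\delta'=\delta/4$.
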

%
%
We now consider the {\em evaluation step error}. The evaluation step at iteration $k$ of CBMPI is a regression problem with the target $(T_{\pi_{k}})^mv_{k-1}$ and a training set $\big\{\big(s^{(i)},\hv_{k}(s^{(i)})\big)\big\}_{i=1}^n$ in which the states $s^{(i)}$ are i.i.d.~samples from $\mu$ and $\hv_{k}(s^{(i)})$ are unbiased estimates of the target computed according to Eq.~\ref{eq:rollout0}. Different function spaces $\F$ (linear or non-linear) may be used to approximate $(T_{\pi_{k}})^mv_{k-1}$. Here we consider a linear architecture with parameters $\alpha\in\Re^d$ and bounded (by $L$) basis functions $\{\varphi_j\}_{j=1}^d,\;\|\varphi_j\|_\infty\leq L$. We denote by $\phi:\X\rightarrow\Re^d,\;\phi(\cdot)=\big(\varphi_1(\cdot),\ldots,\varphi_d(\cdot)\big)^\top$ the feature vector, and by $\F$ the linear function space spanned by the features $\varphi_j$, i.e.,~$\F=\{f_\alpha(\cdot)=\phi(\cdot)^\top\alpha:\alpha\in\Re^d\}$. Now if we define $v_{k}$ as the truncation (by $\Vmax$) of the solution of the above linear regression problem, we may bound the {\em evaluation step error} using the following lemma.
\begin{lemma}[Proof in \appdx{\ref{proof:criticCBMPI}}]
\label{l:criticCBMPI} 
Consider the linear regression setting described above, then we have
\begin{align*}
\|\epsilon_{k}\|_{2,\mu} \leq 4\inf_{f\in\F}\|(T_{\pi_{k}})^mv_{k-1} - f\|_{2,\mu} + \epsilon_1 + \epsilon_2,
\end{align*}
with probability at least $1-\delta$, where
\begin{small}
\begin{align*}
\epsilon_1(n,\delta) &= 32\Vmax\sqrt{\frac{2}{n}\log\Big(\frac{27(12e^2n)^{2(d+1)}}{\delta}\Big)}\;, \\
\epsilon_2(n,\delta) &= 24\Big(\Vmax+\|\alpha_*\|_2\cdot\sup_x\|\phi(x)\|_2\Big)\sqrt{\frac{2}{n}\log\frac{9}{\delta}}\;,
\end{align*}
\end{small}
and $\alpha_*$ is such that $f_{\alpha_*}$ is the best approximation (w.r.t.~$\mu$) of the target function $(T_{\pi_{k}})^mv_{k-1}$ in $\F$.
\end{lemma}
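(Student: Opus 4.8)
The plan is to recognize the evaluation step as a random-design least-squares regression problem with a linear hypothesis space, and to bound the $L_2(\mu)$ error of the \emph{truncated} empirical risk minimizer by instantiating a standard generalization bound for least-squares regression (in the spirit of the nonparametric regression literature and the finite-sample API analyses of~\citet{Lazaric:2010}) specialized to the linear class $\F$. First I would set $g\=(T_{\pi_{k}})^m v_{k-1}$ and observe that each $\hv_k(s^{(i)})$ is an unbiased estimate of $g(s^{(i)})$, so the residuals $\xi_i\=\hv_k(s^{(i)})-g(s^{(i)})$ are zero-mean conditionally on $s^{(i)}$. The quantity $v_k$ is the truncation at $\pm\Vmax$ of the empirical minimizer $\hat f=\argmin_{f\in\F}\frac1n\sum_i\big(f(s^{(i)})-\hv_k(s^{(i)})\big)^2$. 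Since $|g|\le\Vmax$, truncation can only decrease the error, so it suffices to control the $L_2(\mu)$ distance between the truncated estimate and $g$.

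The core is then a bias/variance decomposition. The error splits into an approximation part $\inf_{f\in\F}\|g-f\|_{2,\mu}$, which produces the factor-$4$ term after passing between the squared loss and the $L_2$ norm and after accounting for truncation, and an estimation part. For the estimation part I would relate the true $L_2(\mu)$ norm to the empirical $L_2$ norm uniformly over $\F$ (or its truncation) by a covering-number/concentration argument: the complexity of the truncated linear class is captured by its pseudo-dimension, which is $d+1$, giving a covering number of order $n^{2(d+1)}$ and hence the term $\epsilon_1$ carrying $\log\big(27(12e^2 n)^{2(d+1)}/\delta\big)$.

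The remaining ingredient is the target noise. Because the $\xi_i$ are zero-mean, the cross term $\frac1n\sum_i\xi_i\big(f(s^{(i)})-g(s^{(i)})\big)$ has vanishing conditional expectation; controlling it uniformly over $\F$ \emph{relative to} $\|f-g\|_{2,\mu}$, together with a bound on the magnitude of the best in-class approximator $f_{\alpha_*}$, yields the second estimation term $\epsilon_2$. This is precisely where the scale $\Vmax+\|\alpha_*\|_2\cdot\sup_x\|\phi(x)\|_2$ enters, since $f_{\alpha_*}$ is not itself truncated. A union bound over the two concentration events then delivers the stated inequality with probability at least $1-\delta$.

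The main obstacle will be the uniform control over the \emph{unbounded} linear class $\F$, whose parameter $\alpha$ ranges over all of $\Re^d$. Truncation bounds the estimate $v_k$ itself, but the optimal in-class approximator $f_{\alpha_*}$ need not lie in $[-\Vmax,\Vmax]$, which is exactly why $\epsilon_2$ must absorb the extra $\|\alpha_*\|_2\cdot\sup_x\|\phi(x)\|_2$ factor rather than a clean $\Vmax$. Cleanly separating the three sources of randomness, namely the random design $\{s^{(i)}\}$, the target noise $\{\xi_i\}$, and the in-class approximation error, while tracking the boundedness needed for the concentration inequalities and matching the stated constants, is the delicate part of the argument.
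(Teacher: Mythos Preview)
Your high-level plan matches the paper's proof: the argument is a random-design least-squares analysis that decomposes the empirical error into an in-class approximation part and a projected-noise part, bounds the projected noise by a Pollard-type inequality over the linear class, and then transfers from the empirical norm to the $\mu$-norm via a uniform deviation bound over the truncated class. However, you have mis-attributed which pieces produce $\epsilon_1$ versus $\epsilon_2$, and this will bite you if you try to execute the proof as written.

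In the paper, $\epsilon_2$ does \emph{not} come from the noise cross term. It comes from a single-function concentration step: once you upper-bound the empirical projection error $\|g-\widehat\Pi g\|_{\widehat\mu}$ by $\|g-f_{\alpha_*}\|_{\widehat\mu}$ (empirical risk minimization), you need to relate $\|g-f_{\alpha_*}\|_{\widehat\mu}$ to $\|g-f_{\alpha_*}\|_{\mu}$. This is a Hoeffding-type bound for a \emph{fixed} function, and the range needed is $|g-f_{\alpha_*}|\le \Vmax+\|\alpha_*\|_2\sup_x\|\phi(x)\|_2$; that is the sole origin of the $\|\alpha_*\|$ factor in $\epsilon_2$, and no class complexity appears there, which is why $\epsilon_2$ carries only $\log(9/\delta)$. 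The noise term, on the other hand, is bounded by Pollard's inequality uniformly over the linear span (the projected noise $\widehat\xi$ lives in $\F_n$), and together with the uniform empirical-to-true deviation over the \emph{truncated} class (pseudo-dimension $d+1$) it is absorbed into $\epsilon_1$; this is why $\epsilon_1$ carries the $(12e^2 n)^{2(d+1)}$ covering factor. There are three concentration events (projected noise, single-function empirical-to-true for $f_{\alpha_*}$, uniform empirical-to-true for the truncated class), not two, which is why the constants $27=3^3$ and $9=3^2$ appear after setting $\delta'=\delta/3$.
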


From Lemmas~\ref{l:actorCBMPI} and~\ref{l:criticCBMPI}, we have bounds on $\|\epsilon'_k\|_{1,\mu}$ and $\|\epsilon_k\|_{1,\mu} \le \|\epsilon_k\|_{2,\mu}$. By a union bound argument, we thus control the r.h.s of Eq.~\ref{eq:thm1b}  in $L_1$ norm. In the context of Th.~\ref{lpbound}, this means $p=1$, $q'=1$ and $q=\infty$, and we have the following bound for CBMPI:
\begin{theorem}
Let $d'=\sup_{g \in \F, \pi'} \inf_{\pi\in\Pi}\L^\Pi_{\pi',g}(\mu;\pi)$ and $d_m=\sup_{g \in \F, \pi} \inf_{f \in F} \|(T_{\pi})^m g - f\|_{2,\mu}$. 
With the notations of Th.~\ref{lpbound} and Lemmas~\ref{l:actorCBMPI}-\ref{l:criticCBMPI}, after $k$ iterations, and with probability $1-\delta$, the expected loss $\E_\mu[l_k]=\|l_k\|_{1,\mu}$ of CBMPI is bounded by
\begin{small}
\begin{align}
& \frac{2\gamma^m (\gamma-\gamma^{k-1}) {{\cal C}_{\infty}^{2,k,m}}}{(1-\gamma)^2}  \left( d_m + \epsilon_1(n,\frac{\delta}{2k}) + \epsilon_2(n,\frac{\delta}{2k}) \right) \nonumber \\
&+\frac{(1-\gamma^k) {{\cal C}_{\infty}^{1,k,0}}}{(1-\gamma)^2} \left( d' + \epsilon'_1(N,\frac{\delta}{2k}) + \epsilon'_2(N,M,\frac{\delta}{2k}) \right)  + g(k). \nonumber
\end{align}
\end{small}
\end{theorem}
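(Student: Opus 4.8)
The plan is to combine the abstract $L_p$-norm bound for CBMPI from Theorem~\ref{lpbound} (specifically Eq.~\ref{eq:thm1b}) with the finite-sample error bounds of Lemmas~\ref{l:actorCBMPI} and~\ref{l:criticCBMPI}, instantiated at the specific norm parameters $p=1$, $q'=1$, $q=\infty$. The proof is essentially a substitution-and-bookkeeping argument: Theorem~\ref{lpbound} already expresses $\|l_k\|_{1,\mu}$ in terms of the suprema $\sup_j\|\epsilon_j\|_{1,\mu}$ and $\sup_j\|\epsilon'_j\|_{1,\mu}$ (with $pq'=1$ when $p=q'=1$), so the task is to replace each of these error suprema by the corresponding per-iteration bound from the two lemmas and then control the failure probabilities uniformly across iterations.

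First I would fix the norm parameters: setting $p=1$, $q'=1$, and $q=\infty$ makes the concentrability coefficients ${\cal C}_{q}^{l,k,d}$ collapse to ${\cal C}_{\infty}^{l,k,d}$ raised to the power $1/p=1$, which explains why they appear linearly (not under a $1/p$ root) in the final statement, and makes the relevant norm on the errors the $\|\cdot\|_{1,\mu}$ norm. Next I would invoke Lemma~\ref{l:criticCBMPI} for each evaluation-step error: since $\|\epsilon_j\|_{1,\mu}\le\|\epsilon_j\|_{2,\mu}$, the lemma gives $\|\epsilon_j\|_{1,\mu}\le 4\inf_{f\in\F}\|(T_{\pi_j})^m v_{j-1}-f\|_{2,\mu}+\epsilon_1+\epsilon_2$, and I would bound the approximation term uniformly by $d_m=\sup_{g\in\F,\pi}\inf_{f\in\F}\|(T_\pi)^m g-f\|_{2,\mu}$. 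Similarly, from the remark preceding Lemma~\ref{l:actorCBMPI} we have $\|\epsilon'_j\|_{1,\mu}=\L_{j-1}^\Pi(\mu;\pi_j)$, and Lemma~\ref{l:actorCBMPI} bounds this by $\inf_{\pi\in\Pi}\L_{j-1}^\Pi(\mu;\pi)+2(\epsilon'_1+\epsilon'_2)$; the infimum is then bounded uniformly by $d'=\sup_{g\in\F,\pi'}\inf_{\pi\in\Pi}\L_{\pi',g}^\Pi(\mu;\pi)$. Substituting these uniform bounds into Eq.~\ref{eq:thm1b} and recognizing that the numerical factor $2(\epsilon'_1+\epsilon'_2)$ is absorbed (the statement's $\epsilon'_1,\epsilon'_2$ carry the constants already) yields the two bracketed terms of the claimed bound.

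The one genuinely non-routine step is the union bound over iterations and over the two error types. Each application of Lemma~\ref{l:actorCBMPI} and Lemma~\ref{l:criticCBMPI} holds only with probability at least $1-\delta$; to make all of them hold simultaneously while keeping a single overall confidence $1-\delta$, I would apply each lemma with its confidence parameter set to $\delta/(2k)$. There are at most $k$ evaluation-step errors and $k$ greedy-step errors entering the suprema, so a union bound over these $2k$ events (each failing with probability at most $\delta/(2k)$) guarantees that all the per-iteration bounds hold together with probability at least $1-\delta$. This is exactly why the arguments $\delta/(2k)$ appear inside $\epsilon_1,\epsilon_2,\epsilon'_1,\epsilon'_2$ in the final statement. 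On this high-probability event, the suprema $\sup_{1\le j\le k-2}\|\epsilon_j\|_{1,\mu}$ and $\sup_{1\le j\le k}\|\epsilon'_j\|_{1,\mu}$ are bounded by $d_m+\epsilon_1(n,\delta/2k)+\epsilon_2(n,\delta/2k)$ and $d'+\epsilon'_1(N,\delta/2k)+\epsilon'_2(N,M,\delta/2k)$ respectively.

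I would finish by plugging these two uniform high-probability bounds into the prefactors $\frac{2\gamma^m(\gamma-\gamma^{k-1}){\cal C}_{\infty}^{2,k,m}}{(1-\gamma)^2}$ and $\frac{(1-\gamma^k){\cal C}_{\infty}^{1,k,0}}{(1-\gamma)^2}$ coming directly from Eq.~\ref{eq:thm1b}, and retaining the residual term $g(k)$ unchanged. The main obstacle, though technically mild, is the careful accounting of the union bound and the matching of the abstract errors $\epsilon_k$, $\epsilon'_k$ to their concrete statistical counterparts; in particular one must remember that for CBMPI the abstract evaluation error is $(\gamma P_{\pi_k})^m\epsilon_{k-1}$ rather than $\epsilon_k$ itself (as established in Eq.~\ref{eq:def_error_cbmpi}), which is precisely what produces the extra $\gamma^m$ factor and the shifted index range in Eq.~\ref{eq:thm1b}; since that shift is already baked into Theorem~\ref{lpbound}, here it suffices to bound $\|\epsilon_j\|_{1,\mu}$ for the original regression errors and let the theorem's prefactors handle the propagation.
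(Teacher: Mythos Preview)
Your proposal is correct and follows essentially the same approach as the paper. The paper's own argument is just the sentence preceding the theorem: it notes that Lemmas~\ref{l:actorCBMPI} and~\ref{l:criticCBMPI} control $\|\epsilon'_k\|_{1,\mu}$ and $\|\epsilon_k\|_{1,\mu}\le\|\epsilon_k\|_{2,\mu}$, invokes a union bound, and instantiates Theorem~\ref{lpbound} with $p=1$, $q'=1$, $q=\infty$ --- exactly the three steps you spell out, with your treatment of the $\delta/(2k)$ union bound and the uniform replacement of the per-iteration approximation terms by $d_m$ and $d'$ making explicit what the paper leaves implicit.
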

\begin{remark}
This result leads to a quantitative version of Remark~\ref{r:m}. Assume that we have a fixed budget for the actor and the critic $B=nm=NM|A|m$. Then, up to constants and logarithmic factors, the bound has the form
$
\|l_k\|_{1,\mu} \le O\left(  \gamma^m \left(d_m+\sqrt{\frac{m}{B}}\right) + d' + \sqrt{\frac{M|A|m}{B}}\right).
$
It shows the trade-off in the tuning of $m$: a big $m$ can make the influence of the overall (approximation and estimation) value error small, but that of the estimation error of the classifier bigger.
\end{remark}

\section{Summary and Extensions}

In this paper, we studied a DP algorithm, called modified policy iteration (MPI), that despite its generality that contains the celebrated policy and value iteration methods, has not been thoroughly investigated in the literature. We proposed three approximate MPI (AMPI) algorithms that are extensions of the well-known ADP algorithms: fitted-value iteration, fitted-Q iteration, and classification-based policy iteration. We reported an error propagation analysis for AMPI that unifies those for approximate policy and value iteration. We also provided a finite-sample analysis for the classification-based implementation of AMPI (CBMPI), whose analysis is more general than the other presented AMPI methods. Our results indicate that the parameter of MPI allows us to control the balance of errors (in value function approximation and estimation of the greedy policy) in the final performance of CBMPI. Although AMPI generalizes the existing AVI and classification-based API algorithms, additional experimental work and careful theoretical analysis are required to obtain a better understanding of the behaviour of its different implementations and their relation to the competitive methods. Extension of CBMPI to problems with continuous action space is another interesting direction to pursue.

\begin{small}
\bibliography{biblio}
\bibliographystyle{icml2012}
\end{small}


\newpage
\onecolumn

\begin{center}
\Large{Supplementary Material for\\
Approximate Modified Policy Iteration}
\end{center}

\appendix{


\section{Proof of Lemma~\ref{lemma:core}}
\label{proof:3termsAMPI}

Before we start, we recall the following definitions:
\begin{equation*}
b_k = v_k - T_{\pi_{k+1}} v_k, \quad\quad\quad d_k = v_*-(T_{\pi_k})^m v_{k-1} = v_*-(v_k-\epsilon_k), \quad\quad\quad s_k = (T_{\pi_k})^m v_{k-1} - v_{\pi_k} = (v_k-\epsilon_k)-v_{\pi_k}. 
\end{equation*}

\paragraph{Bounding $b_k$}
\begin{align}
b_k &= v_k - T_{\pi_{k+1}} v_k= v_k-T_{\pi_k}v_k + T_{\pi_k}v_k-T_{\pi_{k+1}} v_k \stackrel{(a)}{\leq} v_k-T_{\pi_k}v_k + \epsilon'_{k+1} \nonumber \\
&= v_k-\epsilon_k-T_{\pi_k}v_k+\gamma P_{\pi_k}\epsilon_k+\epsilon_k-\gamma P_{\pi_k}\epsilon_k+\epsilon'_{k+1} \stackrel{(b)}{=} v_k-\epsilon_k-T_{\pi_k}(v_k-\epsilon_k)+(I-\gamma P_{\pi_k})\epsilon_k+\epsilon'_{k+1}. \label{eq:b1}
\end{align}
Using the definition of $x_k$, i.e.,
\begin{equation}
\label{eq:defx}
x_k\=(I-\gamma P_{\pi_k})\epsilon_k + \epsilon'_{k+1},
\end{equation}
we may write Eq.~\eqref{eq:b1} as
\begin{align}
b_k  &\leq v_k-\epsilon_k-T_{\pi_k}(v_k-\epsilon_k)+x_k \stackrel{(c)}{=} (T_{\pi_k})^m v_{k-1}-T_{\pi_k}(T_{\pi_k})^{m}  v_{k-1} + x_k =  (T_{\pi_k})^m v_{k-1}-(T_{\pi_k})^{m} (T_{\pi_k} v_{k-1}) + x_k \nonumber \\
&= (\gamma P_{\pi_k})^m (v_{k-1}-T_{\pi_k}v_{k-1}) + x_k = (\gamma P_{\pi_k})^m b_{k-1} + x_k. \label{eq:b2}
\end{align}

{\bf (a)} From the definition of $\epsilon'_{k+1}$, we have $\forall \pi'\;\;T_{\pi'}v_k\leq T_{\pi_{k+1}}v_k+\epsilon'_{k+1}$, thus this inequality holds also for $\pi'=\pi_k$. 

{\bf (b)} This step is due to the fact that for every $v$ and $v'$, we have $T_{\pi_k}(v+v')=T_{\pi_k}v + \gamma P_{\pi_k} v'$.

{\bf (c)} This is from the definition of $\epsilon_k$, i.e., $v_k=(T_{\pi_k})^m v_{k-1}+\epsilon_k$.


\paragraph{Bounding $d_k$}

\begin{align}
d_{k+1}& = v_* - (T_{\pi_{k+1}})^m v_k = T_{\pi_*}v_*-T_{\pi_*}v_k+T_{\pi_*}v_k-T_{\pi_{k+1}}v_k+T_{\pi_{k+1}}v_k-(T_{\pi_{k+1}})^m v_k \nonumber \\
&\stackrel{(a)}{\leq} \gamma P_{\pi_*}(v_*-v_k)+\epsilon'_{k+1} + g_{k+1} = \gamma P_{\pi_*}(v_*-v_k) + \gamma P_{\pi_*}\epsilon_k - \gamma P_{\pi_*}\epsilon_k + \epsilon'_{k+1} + g_{k+1} \nonumber \\
&\stackrel{(b)}{=}\gamma P_{\pi_*}\big(v_*-(v_k-\epsilon_k)\big) + y_k + g_{k+1} = \gamma P_{\pi_*}d_k + y_k + g_{k+1} \stackrel{(c)}{=} \gamma P_{\pi_*}d_k + y_k + \sum_{j=1}^{m-1} (\gamma P_{\pi_{k+1}})^j b_k. \label{eq:d1}
\end{align}

{\bf (a)} This step is from the definition of $\epsilon'_{k+1}$ (see step {\bf (a)} in bounding $b_k$) and by defining $g_{k+1}$ as follows:
\begin{equation}
\label{eq:defQ}
g_{k+1}\=T_{\pi_{k+1}}v_k-(T_{\pi_{k+1}})^m v_k.
\end{equation}
{\bf (b)} This is from the definition of $y_k$, i.e.,
\begin{equation}
\label{eq:defy}
y_k \=-\gamma P_{\pi_*}\epsilon_k+\epsilon'_{k+1}.
\end{equation}
{\bf (c)} This step comes from rewriting $g_{k+1}$ as 
\begin{align}
g_{k+1}&=T_{\pi_{k+1}}v_k-(T_{\pi_{k+1}})^m v_k = \sum_{j=1}^{m-1}\big[(T_{\pi_{k+1}})^j v_k -  (T_{\pi_{k+1}})^{j+1}v_k\big] = \sum_{j=1}^{m-1} \big[(T_{\pi_{k+1}})^j v_k -  (T_{\pi_{k+1}})^{j}(T_{\pi_{k+1}} v_k)\big] \nonumber \\
&= \sum_{j=1}^{m-1} (\gamma P_{\pi_{k+1}})^j(v_k-T_{\pi_{k+1}}v_k) = \sum_{j=1}^{m-1} (\gamma P_{\pi_{k+1}})^jb_k. \label{eq:g1}
\end{align}


\paragraph{Bounding $s_k$}

With some slight abuse of notation, we have 
$$v_{\pi_{k}}=(T_{\pi_{k}})^\infty v_k$$
and thus:

\begin{align}
s_k &= (T_{\pi_{k}})^m v_{k-1} - v_{\pi_{k}} \stackrel{(a)}{=} (T_{\pi_{k}})^m v_{k-1} - (T_{\pi_{k}})^\infty v_{k-1} = (T_{\pi_{k}})^m v_{k-1} - (T_{\pi_{k}})^m(T_{\pi_{k}})^\infty v_{k-1} \nonumber \\
&= (\gamma P_{\pi_{k}})^m \big(v_{k-1}-(T_{\pi_{k}})^\infty v_{k-1}\big) = (\gamma P_{\pi_{k}})^m \sum_{j=0}^\infty \big[(T_{\pi_{k}})^j v_{k-1} -  (T_{\pi_{k}})^{j+1}v_{k-1}\big] \nonumber \\
&= (\gamma P_{\pi_{k}})^m \big(\sum_{j=0}^\infty \big[(T_{\pi_{k}})^j v_{k-1} -  (T_{\pi_{k}})^jT_{\pi_{k}}v_{k-1}\big] = (\gamma P_{\pi_{k}})^m \Big(\sum_{j=0}^\infty (\gamma P_{\pi_k})^j\Big) (v_{k-1} - T_{\pi_{k}}v_{k-1}) \nonumber \\
&= (\gamma P_{\pi_{k}})^m (I-\gamma P_{\pi_{k}})^{-1}(v_{k-1}-T_{\pi_{k}}v_{k-1}) = (\gamma P_{\pi_{k}})^m (I-\gamma P_{\pi_{k}})^{-1} b_k \label{eq:s1}.
\end{align}
{\bf (a)} For any $v$, we have $v_{\pi_{k}}=(T_{\pi_{k}})^\infty v$. This step follows by setting $v=v_{k-1}$, i.e.,~$v_{\pi_{k}}=(T_{\pi_{k}})^\infty v_{k-1}$.


\newpage
\section{Proof of Lemma~\ref{lemma:core2}}
\label{proof:lossAMPI}

We begin by focusing our analysis on AMPI.
Here we are interested in bounding the loss $l_k = v_*-v_{\pi_k} = d_k+s_k$. 

By induction, from Eqs.~\eqref{eq:b2} and~\eqref{eq:d1}, we obtain
\begin{equation}
\label{eq:indb}
b_k \leq \sum_{i=1}^{k}\Gamma^{m(k-i)}x_i + \Gamma^{mk}b_{0},
\end{equation}
\begin{equation}
\label{eq:indd}
d_k \leq \sum_{j=0}^{k-1} \Gamma^{k-1-j}\Big(y_j+\sum_{l=1}^{m-1} \Gamma^l b_j\Big) + \Gamma^k d_0.
\end{equation}
in which we have used the notation introduced in Definition~\ref{def:kernel-set}. In Eq.~\eqref{eq:indd}, we also used the fact that from Eq.~\eqref{eq:g1}, we may write $g_{k+1}=\sum_{j=1}^{m-1} \Gamma^j b_k$. Moreover, we may rewrite Eq.~\eqref{eq:s1} as 
\begin{equation}
\label{eq:inds}
s_k= \pp{m}\sum_{j=0}^\infty \pp{j} b_{k-1} = \sum_{j=0}^\infty \pp{m+j} b_{k-1}.
\end{equation}
%


\paragraph{Bounding $l_k$}

From Eqs.~\eqref{eq:indb} and~\eqref{eq:indd}, we may write
\begin{align}
d_k &\leq \sum_{j=0}^{k-1} \Gamma^{k-1-j} \left(y_j + \sum_{l=1}^{m-1} \Gamma^l \Big(\sum_{i=1}^j \Gamma^{m(j-i)}x_i + \Gamma^{mj}b_0\Big)\right) + \Gamma^kd_0 \nonumber \\
&= \sum_{i=1}^{k} \Gamma^{i-1} y_{k-i} + \sum_{j=0}^{k-1} \sum_{l=1}^{m-1}\sum_{i=1}^j \Gamma^{k-1-j+l+m(j-i)}x_i  + z_k, \label{eq:l1}
\end{align}
where we used the following definition
\begin{align*}
z_k \= \sum_{j=0}^{k-1} \sum_{l=1}^{m-1} \Gamma^{k-1+l+j(m-1)}b_0 + \Gamma^kd_0= \sum_{i=k}^{mk-1} \Gamma^ib_0+ \Gamma^kd_0.
\end{align*}
The triple sum involved in Eq.~\eqref{eq:l1} may be written as
\begin{align}
\sum_{j=0}^{k-1} \sum_{l=1}^{m-1}\sum_{i=1}^j \Gamma^{k-1-j+l+m(j-i)}x_i &= \sum_{i=1}^{k-1}  \sum_{j=i}^{k-1} \sum_{l=1}^{m-1} \Gamma^{k-1+l+j(m-1)-mi}x_i = \sum_{i=1}^{k-1} \sum_{j=mi+k-i}^{mk-1} \pp{j-mi}x_i \nonumber \\
&= \sum_{i=1}^{k-1} \sum_{j=k-i}^{m(k-i)-1} \Gamma^jx_i = \sum_{i=1}^{k-1} \sum_{j=i}^{mi-1} \Gamma^jx_{k-i}. \label{eq:l2}
\end{align}
Using Eq.~\eqref{eq:l2}, we may write Eq.~\eqref{eq:l1} as
\begin{equation}
\label{eq:finald}
d_k \le \sum_{i=1}^{k} \Gamma^{i-1} y_{k-i} + \sum_{i=1}^{k-1} \sum_{j=i}^{mi-1} \Gamma^jx_{k-i} + z_k.
\end{equation}
Similarly, from Eqs.~\eqref{eq:inds} and~\eqref{eq:indb}, we have
\begin{align}
s_k &\leq  \sum_{j=0}^\infty \Gamma^{m+j} \Big(\sum_{i=1}^{k-1} \Gamma^{m(k-1-i)}x_i + \Gamma^{m(k-1)}b_0\Big) = \sum_{j=0}^\infty \Big( \sum_{i=1}^{k-1} \Gamma^{m+j+m(k-1-i)}x_i + \Gamma^{m+j+m(k-1)}b_0 \Big) \nonumber \\
&= \sum_{i=1}^{k-1} \sum_{j=0}^\infty \Gamma^{j+m(k-i)}x_i + \sum_{j=0}^{\infty} \Gamma^{j+mk}b_0 = \sum_{i=1}^{k-1} \sum_{j=0}^\infty \Gamma^{j+mi}x_{k-i} + \sum_{j=mk}^\infty \Gamma^jb_0 = \sum_{i=1}^{k-1} \sum_{j=mi}^\infty \Gamma^jx_{k-i} +  z'_k, \label{eq:finals}
\end{align}
where we used the following definition 
\begin{equation*}
z'_k \= \sum_{j=mk}^\infty \Gamma^jb_0 .
\end{equation*}
Finally, using the bounds in Eqs.~\eqref{eq:finald} and~\eqref{eq:finals}, we obtain the following bound on the loss
{\begin{align}
l_k &\leq d_k + s_k \leq \sum_{i=1}^{k} \Gamma^{i-1} y_{k-i} + \sum_{i=1}^{k-1} \Big(\sum_{j=i}^{mi-1} \Gamma^j + \sum_{j=mi}^\infty \Gamma^j\Big) x_{k-i} + z_k + z'_k \nonumber \\
& =  \sum_{i=1}^{k} \Gamma^{i-1} y_{k-i} + \sum_{i=1}^{k-1} \sum_{j=i}^\infty \Gamma^j x_{k-i} + \eta_k, \label{eq:propagationxy}
\end{align}}
where we used the following definition
\begin{equation}
\eta_k \= z_k+z'_k = \sum_{j=k}^{\infty} \Gamma^j b_0 + \Gamma^kd_0. \label{eq:defetak}
\end{equation}
Note that we have the following relation between $b_0$ and $d_0$ 
\begin{equation}
b_0 = v_0 - T_{\pi_1} v_0 = v_0 - v_* + T_{\pi_*}v_* - T_{\pi_*}v_0 + T_{\pi_*}v_0 - T_{\pi_1} v_0 \leq (I - \gamma P_{\pi_*}) (-d_0) + \epsilon'_1, \label{eq:relb0d0}
\end{equation}
In Eq.~\eqref{eq:relb0d0}, we used the fact that $v_*=T_{\pi_*}v_*$, $\epsilon_0=0$, and $T_{\pi_*}v_0 - T_{\pi_1}v_0\leq\epsilon'_1$ (this is because the policy $\pi_1$ is $\epsilon'_1$-greedy w.r.t.~$v_0$). As a result, we may write $|\eta_k|$ either as
\begin{equation}
|\eta_k| \leq \sum_{j=k}^{\infty} \Gamma^j \big[(I-\gamma P_{\pi_*})|d_0|+|\epsilon'_1|\big] + \Gamma^k|d_0| \leq \sum_{j=k}^{\infty} \Gamma^j \big[(I+\Gamma^1)|d_0|+|\epsilon'_1|\big] + \Gamma^k|d_0| = 2 \sum_{j=k}^\infty \Gamma^j |d_0| + \sum_{j=k}^\infty \Gamma^j|\epsilon'_1|, \label{eq:eta1}
\end{equation}
or using the fact that from Eq.~\eqref{eq:relb0d0}, we have $d_0 \le (I - \gamma P_{\pi_*})^{-1}(-b_0+\epsilon'_1)$, as
\begin{equation}
|\eta_k| \leq \sum_{j=k}^\infty \Gamma^j |b_0| + \Gamma^k\sum_{j=0}^\infty (\gamma P_{\pi_*})^j\big(|b_0|+|\epsilon_1'|\big) = \sum_{j=k}^\infty \Gamma^j |b_0| + \Gamma^k\sum_{j=0}^\infty \Gamma^j\big(|b_0|+|\epsilon_1'|\big) = 2 \sum_{j=k}^{\infty} \Gamma^j |b_0|+ \sum_{j=k}^{\infty} \Gamma^j|\epsilon'_1|.  \label{eq:eta2}
\end{equation}
Now, using the definitions of $x_k$ and $y_k$ in Eqs.~\eqref{eq:defx} and~\eqref{eq:defy}, the bound on $|\eta_k|$ in Eq.~\eqref{eq:eta1} or~\eqref{eq:eta2}, and the fact that $\epsilon_0=0$, we obtain
\begin{align}
|l_k| &\leq \sum_{i=1}^{k} \Gamma^{i-1} \big[\Gamma^1|\epsilon_{k-i}|+|\epsilon'_{k-i+1}|\big] + \sum_{i=1}^{k-1} \sum_{j=i}^\infty \Gamma^j \big[(I+\Gamma^1)|\epsilon_{k-i}| + |\epsilon'_{k-i+1}|\big] + |\eta_k| \nonumber \\
&= \sum_{i=1}^{k-1} \Big(\Gamma^i+\sum_{j=i}^\infty(\Gamma^j+\Gamma^{j+1})\Big)|\epsilon_{k-i}| + \Gamma^k|\epsilon_0| + \sum_{i=1}^{k-1}\Big(\Gamma^{i-1}+\sum_{j=i}^\infty\Gamma^j\Big)|\epsilon'_{k-i+1}| + \Gamma^{k-1}|\epsilon'_1|+\sum_{j=k}^\infty \Gamma^j|\epsilon'_1| + h(k) \nonumber \\
&= 2 \sum_{i=1}^{k-1} \sum_{j=i}^\infty \Gamma^j |\epsilon_{k-i}| + \sum_{i=1}^{k-1} \sum_{j=i-1}^\infty \Gamma^j|\epsilon'_{k-i+1}| + \sum_{j=k-1}^\infty \Gamma^j|\epsilon'_1|+ h(k) = 2 \sum_{i=1}^{k-1} \sum_{j=i}^\infty \Gamma^j |\epsilon_{k-i}| + \sum_{i=0}^{k-1} \sum_{j=i}^\infty\Gamma^j|\epsilon'_{k-i}| + h(k), \label{eq:finall}
\end{align}
where we used the following definition
\begin{equation*}
h(k) \= 2 \sum_{j=k}^\infty \Gamma^j |d_0|, \quad\quad\quad \text{or} \quad\quad\quad h(k) \= 2\sum_{j=k}^\infty \Gamma^j |b_0|.
\end{equation*}

We end this proof by adapting the error propagation to CBMPI. 
As expressed by Eqs.~\ref{eq:def_error2_cbmpi} and~\ref{eq:def_error_cbmpi} in Sec.~\ref{analysis}, an analysis of CBMPI can be deduced from that we have just done by replacing $v_k$ with the auxiliary variable $w_k= (T_{\pi_{k}})^m v_{k-1} $ and $\epsilon_k$ with $(\gamma P_{\pi_k})^m \epsilon_{k-1}=\pp{m} \epsilon_{k-1} $. Therefore, using the fact that $\epsilon_0=0$, we can rewrite the bound of Eq.~\ref{eq:finall} for CBMPI as follows:
\begin{align}
l_k &\leq 2\sum_{i=1}^{k-1}\sum_{j=i}^\infty \Gamma^{j+m} |\epsilon_{k-i-1}| + \sum_{i=0}^{k-1}\sum_{j=i}^\infty \Gamma^j |\epsilon'_{k-i}| + h(k) \nonumber \\
&= 2\sum_{i=1}^{k-2}\sum_{j=m+i}^\infty \Gamma^{j} |\epsilon_{k-i-1}| + \sum_{i=0}^{k-1}\sum_{j=i}^\infty \Gamma^j |\epsilon'_{k-i}| + h(k) \label{eq:pointwiseCBMPI}.
\end{align}
%

\newpage
\section{Proof of Lemma~\ref{lemma:fromctolp}}
\label{proof:concenter}


For any integer $t$ and vector $z$, the definition of $\pp{t}$ and the H\"older's inequality imply that
\begin{equation}
\label{eq:holder}
\rho \pp{t} |z| = \left\|  \pp{t} |z| \right\|_{1,\rho} \le \gamma^t c_q(t) \| z \|_{q',\mu} =  \gamma^t c_q(t) \left(\mu |z|^{q'} \right)^{\frac{1}{q'}}.
\end{equation}
We define 
\begin{equation*}
K\=\sum_{l=1}^n \xi_l \left( \sum_{i \in {\cal I}_l} \sum_{j \in {\cal J}_{i}} \gamma^{j} \right), 
\end{equation*}
where $\{\xi_l\}_{l=1}^n$ is a set of non-negative numbers that we will specify later. We now have
\begin{align*}
\|f\|_{p,\rho}^p &= \rho |f|^p \\
&\le K^p \rho \left(\frac{\sum_{l=1}^n \sum_{i \in {\cal I}_l} \sum_{j \in {\cal J}_{i}}  \pp{j} |g_i| }{K} \right)^p
= K^p \rho \left(\frac{\sum_{l=1}^n \xi_l \sum_{i \in {\cal I}_l} \sum_{j \in {\cal J}_{i}}    \pp{j} \left(\frac{|g_i|}{\xi_l}\right) }{K} \right)^p \\
& \stackrel{(a)}{\le} K^p \rho \frac{\sum_{l=1}^n \xi_l \sum_{i \in {\cal I}_l} \sum_{j \in {\cal J}_{i}}     \pp{j} \left(\frac{|g_i|}{\xi_l}\right)^p }{K} 
= K^p \frac{\sum_{l=1}^n \xi_l \sum_{i \in {\cal I}_l} \sum_{j \in {\cal J}_{i}}    \rho \pp{j} \left(\frac{|g_i|}{\xi_l}\right)^p }{K} \\
&\stackrel{(b)}{\le}  K^p \frac{\sum_{l=1}^n \xi_l \sum_{i \in {\cal I}_l} \sum_{j \in {\cal J}_{i}}\gamma^{j}c_q(j)\left(\mu \left( \frac{|g_i|}{\xi_l}\right)^{pq'}\right)^{\frac{1}{q'}} }{K} \\
&= K^p \frac{\sum_{l=1}^n \xi_l \sum_{i \in {\cal I}_l} \sum_{j \in {\cal J}_{i}}\gamma^{j}c_q(j) \left( \frac{\|g_i\|_{pq',\mu}}{\xi_l}\right)^{p}}{K} \\
&\le K^p \frac{\sum_{l=1}^n \xi_l \left( \sum_{i \in {\cal I}_l}   \sum_{j \in {\cal J}_{i}}  \gamma^{j}c_q(j)\right) \left( \frac{\sup_{i \in {\cal I}_l}\|g_i\|_{pq',\mu}}{\xi_l}\right)^{p}}{K} \\
&\stackrel{(c)}{=} K^p \frac{\sum_{l=1}^n \xi_l \left(\sum_{i \in {\cal I}_l} \sum_{j \in {\cal J}_{i}}\gamma^{j}\right){\cal C}_{q}(l) \left( \frac{\sup_{i \in {\cal I}_l}\|g_i\|_{pq',\mu}}{\xi_l}\right)^{p}}{K},
\end{align*}
where {\bf (a)} results from Jensen's inequality, {\bf (b)} from Eq.~\ref{eq:holder}, and {\bf (c)} from the definition of ${\cal C}_{q}(l)$. Now, by setting $\xi_l = \big( {\cal C}_{q}(l)\big)^{1/p} \sup_{i \in {\cal I}_l} \|g_i\|_{pq',\mu}$, we obtain
\begin{equation*}
\|f\|_{p,\rho}^p \leq K^p \frac{\sum_{l=1}^n \xi_l \left(\sum_{i \in {\cal I}_l} \sum_{j \in {\cal J}_{i}}\gamma^{j}\right)}{K}=K^p,
\end{equation*}
where the last step follows from the definition of $K$.


\newpage
\section{Proof of Theorem~\ref{lpbound} \& other Bounds on the Loss}
\label{proof:lpbound}

\begin{proof}
We only detail the proof for AMPI (the proof being similar for CBMPI).
We define ${\cal I}=\{ 1, 2, \cdots, 2k\}$, the partition ${\cal I}=\{{\cal I}_1,{\cal I}_2,{\cal I}_3\}$ as ${\cal I}_1=\{1,\dots,k-1\}$, ${\cal I}_2=\{k,\dots,2k-1\}$, and ${\cal I}_3=\{2k\}$, and for each $i \in {\cal I}$
\begin{equation*}
g_i  = 
\left\{
\begin{array}{ll}
2\epsilon_{k-i} & \mbox{if }\quad 1 \le i \le k-1, \\
\epsilon'_{k-(i-k)} & \mbox{if }\quad k \le i \le 2k-1, \\
2 d_0 \mbox{~(or $2 b_0$)}& \mbox{if }\quad i=2k,
\end{array}
\right.
\mbox{~~~~~and~~~~~}
{\cal J}_i  = 
\left\{
\begin{array}{ll}
\{i, i+1, \cdots \} & \mbox{if }\quad 1 \le i \le k-1, \\
\{i-k, i-k+1, \cdots \} & \mbox{if }\quad k \le i \le 2k-1, \\
\{k, k+1, \cdots \} & \mbox{if }\quad i=2k.
\end{array}
\right.
\end{equation*}
Note that here we have divided the terms in the point-wise bound of Lemma~\ref{lemma:core2} into three groups: the {\em evaluation error} terms $\{\epsilon_j\}_{j=1}^{k-1}$, the {\em greedy step error} terms $\{\epsilon'_j\}_{j=1}^k$, and finally the residual term $h(k)$. With the above definitions and the fact that the loss $l_k$ is non-negative, Lemma~\ref{lemma:core2} may be rewritten as
\begin{equation*}
|l_k| \le \sum_{l=1}^3\sum_{i \in {\cal I}_l} \sum_{j \in {\cal J}_i} \Gamma^j|g_i|. 
\end{equation*}
The result follows by applying Lemma~\ref{lemma:fromctolp} and noticing that $\sum_{i=i_0}^{k-1} \sum_{j=i}^\infty \gamma^j = \frac{\gamma^{i_0}-\gamma^k}{(1-\gamma)^2}$.
\end{proof} 

Here in oder to show the flexibility of Lemma~\ref{lemma:fromctolp}, we group the terms differently and derive an alternative $L_p$-bound for the loss of AMPI and CBMPI. In analogy with the results of~\citet{Farahmand:2010}, this new bound shows that the last iterations have the highest influence on the loss (the influence exponentially decreases towards the initial iterations).
\begin{theorem}
\label{alternativebound}
With the notations of Theorem~\ref{lpbound}, after $k$ iterations, the loss of AMPI satisfies
\begin{equation*}
\|l_k \|_{p,\rho} \le 2 \sum_{i=1}^{k-1}\frac{\gamma^i}{1-\gamma}\left( {\cal C}_{q}^{i,i+1}\right)^{\frac{1}{p}} \|\epsilon_{k-i}\|_{pq',\mu}  + \sum_{i=0}^{k-1}\frac{\gamma^i}{1-\gamma}\left({\cal C}_{q}^{i,i+1}\right)^{\frac{1}{p}} \|\epsilon'_{k-i}\|_{pq',\mu} + g(k).
\end{equation*}
while the loss of CBMPI satisfies
\begin{equation*}
  \|l_k \|_{p,\rho} \le 2 \gamma^m \sum_{i=1}^{k-2}\frac{\gamma^i}{1-\gamma}\left( {\cal C}_{q}^{i,i+1}\right)^{\frac{1}{p}} \|\epsilon_{k-i-1}\|_{pq',\mu}  + \sum_{i=0}^{k-1}\frac{\gamma^i}{1-\gamma}\left({\cal C}_{q}^{i,i+1}\right)^{\frac{1}{p}} \|\epsilon'_{k-i}\|_{pq',\mu} + g(k).
\end{equation*}
\end{theorem}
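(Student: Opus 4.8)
The plan is to reuse verbatim the two point-wise bounds of Lemma~\ref{lemma:core2} and the summation-to-$L_p$ conversion of Lemma~\ref{lemma:fromctolp}, exactly as in the proof of Theorem~\ref{lpbound}; the only ingredient I would change is the \emph{partition} $\{{\cal I}_1,\dots,{\cal I}_n\}$ of the index set ${\cal I}$ that is handed to Lemma~\ref{lemma:fromctolp}. The proof of Theorem~\ref{lpbound} lumps all evaluation-error indices into one block, all greedy-error indices into a second, and the residual into a third, which yields the aggregated coefficients ${\cal C}_q^{1,k,0}$, ${\cal C}_q^{0,k,0}$, and so on. Here I would instead take the \emph{finest} partition, putting every index in its own singleton block. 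Since Lemma~\ref{lemma:fromctolp} is valid for an arbitrary partition, this is legitimate, and it is precisely the maneuver that moves the weighting from a single global factor onto a per-iteration factor $\gamma^i/(1-\gamma)$, thereby exposing the geometrically decaying influence of the earlier iterations.

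Concretely, I would keep the same assignment of $g_i$ and ${\cal J}_i$ as in the proof of Theorem~\ref{lpbound} (evaluation terms $g_i=2\epsilon_{k-i}$ with ${\cal J}_i=\{i,i+1,\dots\}$ for $1\le i\le k-1$; greedy terms $g_i=\epsilon'_{k-(i-k)}$ with ${\cal J}_i=\{i-k,i-k+1,\dots\}$ for $k\le i\le 2k-1$; residual $g_{2k}=2d_0$ or $2b_0$ with ${\cal J}_{2k}=\{k,k+1,\dots\}$). On a singleton block $\{l\}$ the supremum over the block is trivial and ${\cal C}_q(l)$ collapses to $\frac{\sum_{j\in{\cal J}_l}\gamma^j c_q(j)}{\sum_{j\in{\cal J}_l}\gamma^j}$. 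Using the elementary identity $\sum_{j=i}^\infty\gamma^j=\gamma^i/(1-\gamma)$, I would check that for an evaluation or greedy index this singleton coefficient is exactly ${\cal C}_q^{i,i+1}$ and that its companion factor $\sum_{j\in{\cal J}_l}\gamma^j$ equals $\gamma^i/(1-\gamma)$; multiplying these and summing over blocks reproduces the first two sums of the statement. The residual block has ${\cal J}_{2k}=\{k,\dots\}$, hence coefficient ${\cal C}_q^{k,k+1}$ and weight $\gamma^k/(1-\gamma)$, which together with $g_{2k}$ is exactly $g(k)$, the $\min(\|d_0\|,\|b_0\|)$ coming from the freedom to use either version of $h(k)$.

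For CBMPI I would start from the second point-wise bound of Lemma~\ref{lemma:core2}, whose evaluation terms carry inner sums $\sum_{j=i+m}^\infty\Gamma^j$. The same singleton argument applies, but now an evaluation index has ${\cal J}=\{i+m,i+m+1,\dots\}$, so its weight becomes $\gamma^{i+m}/(1-\gamma)=\gamma^m\cdot\gamma^i/(1-\gamma)$; this split is what pulls the global $\gamma^m$ prefactor out front and leaves the displayed $\gamma^i/(1-\gamma)$ per-iteration weight, while the singleton coefficient simultaneously picks up the shifted argument $c_q(\cdot+m)$, i.e.\ it is the $d=m$ instance of the coefficient of the definition in Theorem~\ref{lpbound}. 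The greedy and residual terms are identical to the AMPI case, so the remaining bookkeeping is routine.

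The main obstacle I anticipate is not conceptual but the index algebra of collapsing the doubly-indexed sums into singleton coefficients, and in particular verifying for CBMPI that $\gamma^{i+m}$ factors cleanly into the stated $\gamma^m$ prefactor times $\gamma^i/(1-\gamma)$ while the residual $m$-shift is correctly absorbed into the argument of $c_q$. The one identity that genuinely must be checked is that the singleton ${\cal C}_q(l)$ coincides with the two-index ${\cal C}_q^{i,i+1}$; this follows directly by unfolding the definition of ${\cal C}_q^{l,k,d}$ with lower index $i$, upper index $i+1$, and the appropriate shift.
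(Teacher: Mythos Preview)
Your proposal is correct and is exactly the paper's own approach: keep the same $\mathcal I$, $(g_i)$, $(\mathcal J_i)$ from the proof of Theorem~\ref{lpbound} and apply Lemma~\ref{lemma:fromctolp} with the finest (singleton) partition, then read off the per-iteration weights $\gamma^i/(1-\gamma)$ and coefficients ${\cal C}_q^{i,i+1}$. The paper's proof is in fact terser than yours (it simply says ``make as many groups as terms'' and applies the lemma), and your additional observation that the CBMPI evaluation blocks naturally produce the $d=m$ shifted coefficient is a correct and careful reading of the index algebra.
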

\begin{proof}
Again, we only detail the proof for AMPI (the proof being similar for CBMPI).
We define $\cal I$, $(g_i)$ and $({\cal J}_i)$ as in the proof of Theorem~\ref{lpbound}. We then make as many groups as terms, i.e.,~for each $n \in \{1,2, \dots,  2k-1\}$, we define ${\cal I}_n=\{n\}$. The result follows by application of Lemma~\ref{lemma:fromctolp}.
\end{proof}


\newpage
\section{Proof of Lemma~\ref{l:actorCBMPI}}
\label{proof:actorCBMPI}

The proof of this lemma is similar to the proof of Theorem~1 in~\citet{Lazaric:2010}. Before stating the proof, we report the following two lemmas that are used in the proof. 
\begin{lemma}\label{l:vc-bound}
Let $\Pi$ be a policy space with finite VC-dimension $h=VC(\Pi)<\infty$ and $N$ be the number of states in the rollout set $\D_{k-1}$ drawn i.i.d.~from the state distribution $\mu$. Then we have 
\begin{equation*}
\mathbb P_{\D_{k-1}}\left[\sup_{\pi\in\Pi}\Big|\L_{k-1}^\Pi(\widehat{\mu};\pi)-\L_{k-1}^\Pi(\mu;\pi)\Big| > \epsilon\right] \leq \delta\;,
\end{equation*}
with $\epsilon = 16\Qmax\sqrt{\frac{2}{N}\big(h\log\frac{eN}{h} + \log\frac{8}{\delta}\big)}$.
\end{lemma}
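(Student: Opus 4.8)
The plan is to recognize Lemma~\ref{l:vc-bound} as a standard uniform deviation (Vapnik--Chervonenkis) bound for the empirical process indexed by the policy space $\Pi$, exactly in the spirit of Theorem~1 of \citet{Lazaric:2010}. For a fixed policy $\pi$, introduce the per-state loss $g_\pi(s) \= \max_{a\in\A} Q_{k-1}(s,a) - Q_{k-1}(s,\pi(s))$. Since $|Q_{k-1}|\le\Qmax$ and the maximizing action always dominates $\pi(s)$, we have $0\le g_\pi(s)\le 2\Qmax$ for every $s$. With this notation, $\L_{k-1}^\Pi(\mu;\pi)=\E_{s\sim\mu}[g_\pi(s)]$ and $\L_{k-1}^\Pi(\widehat\mu;\pi)=\frac1N\sum_{i=1}^N g_\pi(s^{(i)})$, so the quantity to control is the uniform gap $\sup_{\pi\in\Pi}\big|\frac1N\sum_i g_\pi(s^{(i)}) - \E_\mu g_\pi\big|$ between the empirical average over the i.i.d.\ rollout states and the true mean.

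First I would reduce this real-valued uniform convergence to one over a class of sets via the layer-cake identity $g_\pi(s)=\int_0^{2\Qmax}\mathds{1}\{g_\pi(s)>t\}\,dt$. Integrating against $\mu$ and $\widehat\mu$ and subtracting gives $|\L_{k-1}^\Pi(\widehat\mu;\pi)-\L_{k-1}^\Pi(\mu;\pi)|\le 2\Qmax\sup_{t}\big|\widehat\mu(C_{\pi,t})-\mu(C_{\pi,t})\big|$, where $C_{\pi,t}\=\{s: g_\pi(s)>t\}$. Taking the supremum over $\pi$ as well, the problem becomes a uniform control of empirical set-probabilities over the class $\mathcal C\=\{C_{\pi,t}: \pi\in\Pi,\, t\in[0,2\Qmax]\}$.

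The second step is combinatorial: bound $VC(\mathcal C)$ in terms of $h=VC(\Pi)$. Because $\max_{a}Q_{k-1}(s,a)$ does not depend on $\pi$, the event $C_{\pi,t}$ equals $\{Q_{k-1}(s,\pi(s))<\max_a Q_{k-1}(s,a)-t\}$, \ie it depends on $\pi$ only through the selected action $\pi(s)$, and thresholding in $t$ contributes only the usual single extra degree of freedom. Hence the shatter coefficient of $\mathcal C$ on $N$ points grows no faster than that of $\Pi$, so by Sauer's lemma it is at most $(eN/h)^h$. Applying a classical VC inequality of the form $\setP[\sup_{C\in\mathcal C}|\widehat\mu(C)-\mu(C)|>u]\le 8\,(eN/h)^h\,e^{-c\,Nu^2}$, equating the right-hand side to $\delta$, solving for $u$, and multiplying by the range factor $2\Qmax$ yields an $\epsilon$ of the form $\Qmax\sqrt{\frac2N\big(h\log\frac{eN}{h}+\log\frac8\delta\big)}$ up to an absolute constant, the precise value ($16$) being pinned down by the particular VC inequality invoked.

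The main obstacle is this second (combinatorial) step: one must argue carefully that the level-set class $\mathcal C$ inherits the VC dimension of $\Pi$ \emph{without} introducing a spurious dependence on $|\A|$ — the subtlety being that $\pi$ is a multiclass classifier rather than a binary one — and then track the constants (the $8$ and the exponential rate) through the chosen symmetrization so as to land exactly on the stated expression. Everything else is the routine layer-cake reduction together with a single application of Sauer's lemma.
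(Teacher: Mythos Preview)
The paper's ``proof'' of this lemma is a single sentence: it states that the result is a restatement of Lemma~1 in \citet{Lazaric:2010} and provides no further detail. Your proposal is therefore already more substantive than what appears in the paper, and the sketch you give---layer-cake reduction to a set class, a Sauer-type bound on the growth function inherited from $\Pi$, then a standard symmetrization/VC inequality---is indeed the content of that cited lemma. You have also correctly isolated the only genuinely delicate point, namely that $\Pi$ is a multiclass hypothesis space and one must argue that the induced level-set class $\mathcal C$ still has shatter coefficient controlled by $(eN/h)^h$ without picking up a factor in $|\A|$; this is precisely where \citet{Lazaric:2010} do the work, and where the exact constants ($16$, $8$) are fixed. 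Beyond pointing at their Lemma~1, the paper adds nothing, so there is no alternative approach to contrast with yours.
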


\begin{proof} 
This is a restatement of Lemma~1 in~\citet{Lazaric:2010}. 
\end{proof}

\begin{lemma}\label{l:vc-bound2}
Let $\Pi$ be a policy space with finite VC-dimension $h=VC(\Pi)<\infty$ and $s^{(1)},\ldots,s^{(N)}$ be an arbitrary sequence of states. At each state we simulate $M$ independent rollouts of the form , then we have 
\begin{equation*}
\mathbb P\left[\sup_{\pi\in\Pi}\Big|\frac{1}{N}\sumSamples\frac{1}{M}\sum_{j=1}^MR_{k-1}^j\big(s^{(i,j)},\pi(s^{(i,j)})\big)-\frac{1}{N}\sumSamples Q_{k-1}\big(s^{(i,j)},\pi(s^{(i,j)})\big)\Big| > \epsilon\right] \leq \delta\;,
\end{equation*}
with $\epsilon = 8\Qmax\sqrt{\frac{2}{MN}\big(h\log\frac{eMN}{h} + \log\frac{8}{\delta}\big)}$.
\end{lemma}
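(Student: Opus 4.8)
The plan is to prove Lemma~\ref{l:vc-bound2} by the same VC-based uniform convergence argument that underlies Lemma~\ref{l:vc-bound} (hence Lemma~1 of \citet{Lazaric:2010}), but applied to the rollout returns $R_{k-1}^j$ rather than to the empirical loss. Throughout, the states $s^{(1)},\dots,s^{(N)}$ are held fixed, so the only randomness is in the $NM$ rollouts, which are mutually independent. First I would record the two structural facts that make the argument go through: (i) each return $R_{k-1}^j(s^{(i)},a)$ is an \emph{unbiased} estimate of $Q_{k-1}(s^{(i)},a)$, i.e.~$\E[R_{k-1}^j(s^{(i)},a)]=Q_{k-1}(s^{(i)},a)$, which is exactly the unbiasedness noted in the description of the greedy step; and (ii) since rewards are bounded by $\Rmax$ and $v_{k-1}$ by $\Vmax=\Qmax$, a direct computation gives $|R_{k-1}^j(s^{(i)},a)|\le\sum_{t=0}^m\gamma^t\Rmax+\gamma^{m+1}\Vmax=\Qmax$, so both $R_{k-1}^j$ and $Q_{k-1}$ lie in $[-\Qmax,\Qmax]$.

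For a \emph{fixed} policy $\pi$, the quantity inside the supremum is the average of the $NM$ independent, zero-mean variables $R_{k-1}^j(s^{(i)},\pi(s^{(i)}))-Q_{k-1}(s^{(i)},\pi(s^{(i)}))$, each confined to an interval of width at most $2\Qmax$, so Hoeffding's inequality controls its deviation. To make this uniform over $\Pi$, the key observation is that the functional $\pi\mapsto\frac{1}{NM}\sum_{i,j}R_{k-1}^j(s^{(i)},\pi(s^{(i)}))$ depends on $\pi$ only through the finitely many actions that $\pi$ assigns to the observed states. I would therefore run the standard symmetrization argument, treating the $NM$ rollouts as the effective sample, and bound the number of distinct action patterns by Sauer's lemma in terms of $h=VC(\Pi)$; over $NM$ sample points this growth function is at most $(eMN/h)^h$, which is precisely where the $\log\frac{eMN}{h}$ term in $\epsilon$ originates. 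Combining the symmetrized tail bound with the per-policy Hoeffding estimate and inverting to express $\epsilon$ as a function of $\delta$ yields the stated constant $8\Qmax$, the factor $\sqrt 2$, and the $\log\frac{8}{\delta}$ term, exactly as in the companion Lemma~\ref{l:vc-bound}.

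The main obstacle is bookkeeping rather than conceptual. One must keep the two sources of randomness cleanly separated: the states are held fixed and only the rollouts are random, so this lemma is genuinely complementary to Lemma~\ref{l:vc-bound}, which instead absorbs the randomness of the state sampling. One must also verify that the reduction of $\Pi$ to its action patterns is compatible with the symmetrization step, so that the growth function is correctly evaluated at the $NM$ rollout samples and the Hoeffding width $2\Qmax$ propagates to the final constants. Since the argument is structurally identical to that for Lemma~\ref{l:vc-bound}, I would phrase the proof as a direct transcription of it, with the $Q_{k-1}$-returns in place of the loss and $NM$ independent samples in place of $N$, citing \citet{Lazaric:2010} for the underlying VC inequality.
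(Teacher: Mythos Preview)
Your proposal is correct and matches the paper's own proof, which consists of the single sentence ``The proof is similar to the one for Lemma~\ref{l:vc-bound}'' (itself a restatement of Lemma~1 in \citet{Lazaric:2010}). Your write-up simply unpacks that reference: unbiasedness and $\Qmax$-boundedness of the returns, Hoeffding for a fixed $\pi$, symmetrization, and Sauer's lemma over the $NM$ rollout samples to produce the $h\log\frac{eMN}{h}$ term --- exactly the argument the paper is pointing to.
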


\begin{proof} 
The proof is similar to the one for Lemma~\ref{l:vc-bound}. 
\end{proof}

\begin{proof}{\bf (Lemma~\ref{l:actorCBMPI}) } Let $a^*(\cdot)=\argmax_{a\in\action}Q_{k-1}(\cdot,a)$ be the greedy action. To simplify the notation, we remove the dependency of $a^*$ on states and use $a^*$ instead of $a^*(x_i)$ in the following. We prove the following series of inequalities:
\begin{align*}
\L^\Pi_{k-1}(\mu;\pi_k) &\stackrel{\text{(a)}}{\leq} \L^{\Pi}_{k-1}(\widehat{\mu};\pi_k)+\epsilon'_1 \quad\quad\quad\text{w.p. }1-\delta' \\
&= \frac{1}{N} \sumSamples\Big[Q_{k-1}(x_i,a^*)-Q_{k-1}\big(x_i,\pi_k(x_i)\big)\Big]+\epsilon'_1 \\
&\stackrel{\text{(b)}}{\leq} \frac{1}{N}\sumSamples\Big[Q_{k-1}(x_i,a^*)-\hQ_{k-1}\big(x_i,\pi_k(x_i)\big)\Big]+\epsilon'_1+\epsilon'_2\quad\quad\quad\text{w.p. }1-2\delta' \\
&\stackrel{\text{(c)}}{\leq} \frac{1}{N}\sumSamples\Big[Q_{k-1}(x_i,a^*)-\hQ_{k-1}\big(x_i,\pi^*(x_i)\big)\Big]+\epsilon'_1+\epsilon'_2 \\
&\leq \frac{1}{N}\sumSamples\Big[Q_{k-1}(x_i,a^*)-Q_{k-1}\big(x_i,\pi^*(x_i)\big)\Big]+\epsilon'_1+2\epsilon'_2\quad\quad\quad\text{w.p. } 1-3\delta' \\
&= \L^\Pi_{k-1}(\widehat{\mu};\pi^*)+ \epsilon'_1+2\epsilon'_2 \leq \L^\Pi_{k-1}(\mu;\pi^*)+2(\epsilon'_1+\epsilon'_2)\quad\quad\quad\text{w.p. } 1-4\delta' \\
&= \inf_{\pi\in\Pi}\L^\Pi_{k-1}(\mu;\pi)+2(\epsilon'_1+\epsilon'_2). 
\end{align*}

The statement of the theorem is obtained by $\probParam'=\probParam/4$. \\

\noindent
{\bf (a)} This follows from Lemma~\ref{l:vc-bound}. \\
\noindent
{\bf (b)} Here we introduce the estimated action-value function $\hQ_{k-1}$ by bounding 
\begin{equation*}
\sup_{\pi\in\Pi}\bigg[\frac{1}{N}\sumSamples\hQ_{k-1}\big(s^{(i)},\pi(s^{(i)})\big)-\frac{1}{N}\sumSamples Q_{k-1}\big(s^{(i)},\pi(s^{(i)})\big)\bigg]
\end{equation*}
using Lemma~\ref{l:vc-bound2}. \\
%
%
%
\noindent
{\bf (c)} From the definition of $\pi_k$ in CBMPI, we have 
\begin{equation*}
\pi_k=\argmin_{\pi\in\Pi} \widehat{\L}^{\Pi}_{k-1}(\widehat{\mu};\pi)=\argmax_{\pi\in\Pi}\frac{1}{N}\sumSamples\hQ_{k-1}\big(s^{(i)},\pi(s^{(i)})\big),
\end{equation*}
thus, $-1/N\sumSamples\hQ_{k-1}\big(s^{(i)},\pi_k(s^{(i)})\big)$ can be maximized by replacing $\pi_k$ with any other policy, particularly with
\begin{equation*}
\pi^* = \argmin_{\pi\in\Pi}\int_{\S}\left(\max_{a\in\A}Q_{k-1}(s,a) - Q_{k-1}\big(s,\pi(s)\big)\right) \mu(ds).
\end{equation*}
\end{proof}


\newpage
\section{Proof of Lemma~\ref{l:criticCBMPI}}
\label{proof:criticCBMPI}

\begin{figure}[t]
\centering
\includegraphics[scale=0.35]{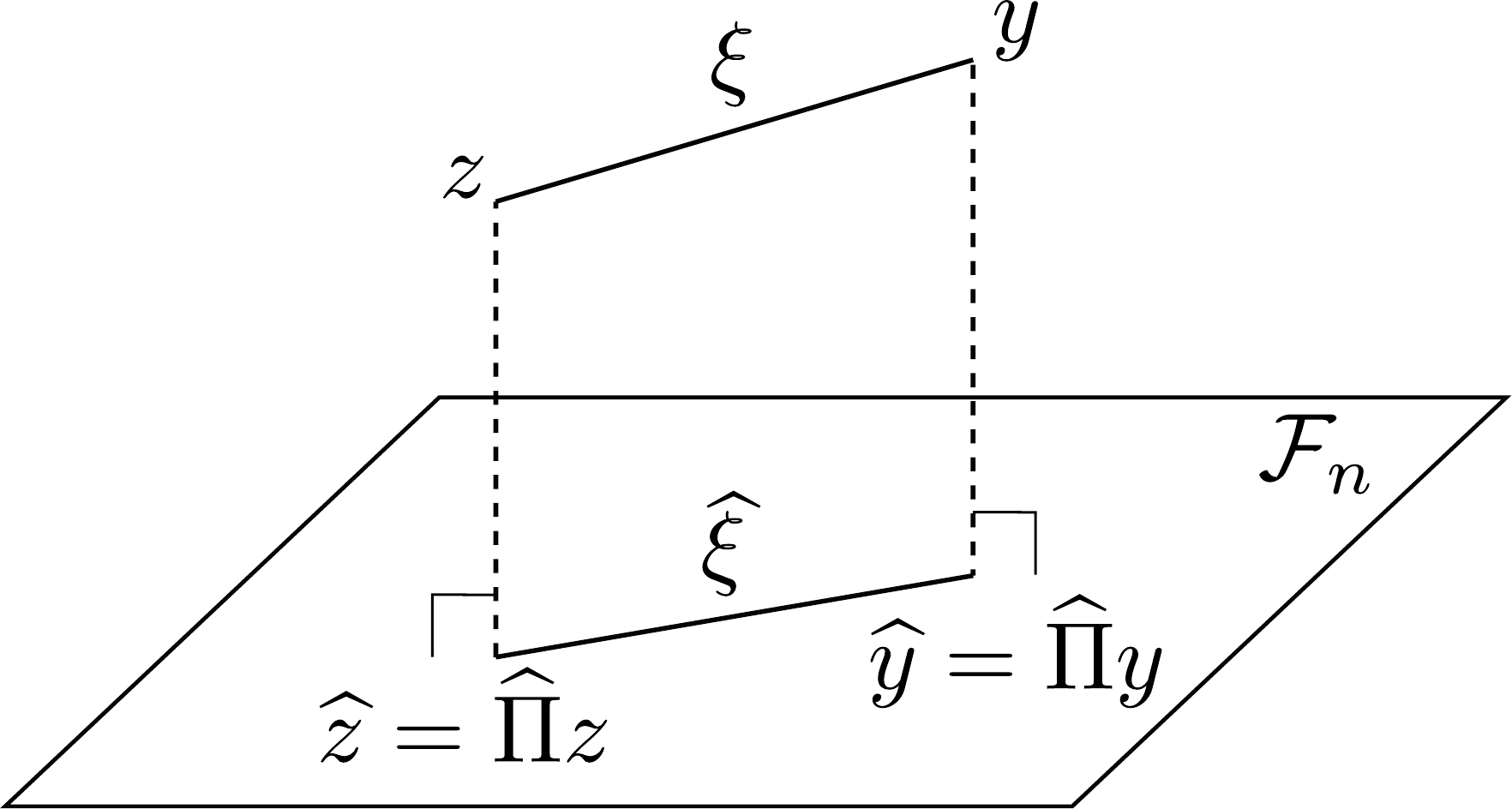}
\caption{The vectors used in the proof.}
\label{fig:critic}
\end{figure}

Let us define two $n$-dimensional vectors $z=\Big(\big[(T_{\pi_{k}})^mv_{k-1}\big](s^{(1)}),\ldots,\big[(T_{\pi_{k}})^mv_{k-1}\big](s^{(n)})\Big)^\top$ and $y=\big(\hv_{k}(s^{(1)}),\ldots,\hv_{k}(s^{(n)})\big)^\top$ and their orthogonal projections onto the vector space $\F_n$ as $\widehat{z}=\widehat{\Pi}z$ and $\widehat{y}=\widehat{\Pi}y=\big(\widetilde{v}_{k}(s^{(1)}),\ldots,\widetilde{v}_{k}(s^{(n)})\big)^\top$, where $\widetilde{v}_{k}$ is the result of linear regression and its truncation (by $\Vmax$) is $v_{k}$, i.e.,~$v_{k}=\mathbb{T}(\widetilde{v}_{k})$ (see Figure~\ref{fig:critic}). What we are interested is to find a bound on the regression error $\|z-\widehat{y}\|$ (the difference between the target function $z$ and the result of the regression $\widehat{y}$). We may decompose this error as
\begin{equation}
\label{eq:aa1}
\|z-\widehat{y}\|_n \leq \|\widehat{z}-\widehat{y}\|_n + \|z-\widehat{z}\|_n = \|\widehat{\xi}\|_n + \|z-\widehat{z}\|_n,
\end{equation}
where $\widehat{\xi}=\widehat{z}-\widehat{y}$ is the projected noise (estimation error) $\widehat{\xi}=\widehat{\Pi}\xi$, with the noise vector $\xi=z-y$ defined as $\xi_i=\big[(T_{\pi_{k}})^mv_{k-1}\big](s^{(i)})-\hv_{k}(s^{(i)})$. It is easy to see that noise is zero mean, i.e.,~$\mathbb{E}[\xi_i]=0$ and is bounded by $2\Vmax$, i.e.,~$|\xi_i|\leq 2\Vmax$. We may write the estimation error as
\begin{equation*}
\|\widehat{z}-\widehat{y}\|_n^2=\|\widehat{\xi}\|_n^2=\langle\widehat{\xi},\widehat{\xi}\rangle=\langle\xi,\widehat{\xi}\rangle,
\end{equation*}
where the last equality follows from the fact that $\widehat{\xi}$ is the orthogonal projection of $\xi$. Since $\widehat{\xi}\in\F_n$, let $f_\alpha\in\F$ be any function whose values at $\{s^{(i)}\}_{i=1}^n$ equals to $\{\xi_i\}_{i=1}^n$. By application of a variation of Pollard's inequality~(Gy\"{o}rfi et al.,~2002), we obtain
\begin{equation*}
\langle\xi,\widehat{\xi}\rangle = \frac{1}{n}\sum_{i=1}^n\xi_if_\alpha(s^{(i)}) \leq 4\Vmax\|\widehat{\xi}\|_n\sqrt{\frac{2}{n}\log\left(\frac{3(9e^2n)^{d+1}}{\delta'}\right)},
\end{equation*}
with probability at least $1-\delta'$. Thus, we have
\begin{equation}
\label{eq:aa2}
\|\widehat{z}-\widehat{y}\|_n = \|\widehat{\xi}\|_n \leq 4\Vmax\sqrt{\frac{2}{n}\log\left(\frac{3(9e^2n)^{d+1}}{\delta'}\right)}.
\end{equation}
From Eqs.~\ref{eq:aa1} and~\ref{eq:aa2}, we have
\begin{equation}
\label{eq:aa3}
\|(T_{\pi_{k}})^mv_{k-1} - \widetilde{v}_{k}\|_{\widehat{\mu}} \leq \|(T_{\pi_{k}})^mv_{k-1} - \widehat{\Pi}(T_{\pi_{k}})^mv_{k-1}\|_{\widehat{\mu}} + 4\Vmax\sqrt{\frac{2}{n}\log\left(\frac{3(9e^2n)^{d+1}}{\delta'}\right)},
\end{equation}
where $\widehat{\mu}$ is the empirical norm induced from the $n$ i.i.d.~samples from $\mu$. \\

Now in order to obtain a random design bound, we first define $f_{\widehat{\alpha}_*}\in\F$ as $f_{\widehat{\alpha}_*}(s^{(i)})=\big[\widehat{\Pi}(T_{\pi_{k}})^mv_{k-1}\big](s^{(i)})$, and then define $f_{\alpha_*}=\Pi(T_{\pi_{k}})^mv_{k-1}$ that is the best approximation (w.r.t.~$\mu$) of the target function $(T_{\pi_{k}})^mv_{k-1}$ in $\F$. Since $f_{\widehat{\alpha}_*}$ is the minimizer of the empirical loss, any function in $\F$ different than $f_{\widehat{\alpha}_*}$ has a bigger empirical loss, thus we have
\begin{align}
\label{eq:aa4}
\|f_{\widehat{\alpha}_*} - (T_{\pi_{k}})^mv_{k-1}\|_{\widehat{\mu}} &\leq \|f_{\alpha_*} - (T_{\pi_{k}})^mv_{k-1}\|_{\widehat{\mu}} \leq 2\|f_{\alpha_*} - (T_{\pi_{k}})^mv_{k-1}\|_\mu \nonumber \\
&+ 12\Big(\Vmax + \|\alpha_*\|_2\;\sup_x\|\phi(x)\|_2\Big)\sqrt{\frac{2}{n}\log\frac{3}{\delta'}}\;,
\end{align}
with probability at least $1-\delta'$, where the second inequality is the application of a variation of Theorem~11.2 in the book by 
Gy\"{o}rfi et al.,~(2002) with $\|f_{\alpha_*} - (T_{\pi_{k}})^mv_{k-1}\|_\infty\leq\Vmax + \|\alpha^*\|_2\;\sup_x\|\phi(x)\|_2$. Similarly, we can write the left-hand-side of Equation~\ref{eq:aa3} as
\begin{equation}
\label{eq:aa5}
2\|(T_{\pi_{k}})^mv_{k-1}-\widetilde{v}_{k}\|_{\widehat{\mu}} \geq 2\|(T_{\pi_{k}})^mv_{k-1}-\mathbb{T}(\widetilde{v}_{k})\|_{\widehat{\mu}} \geq \|(T_{\pi_{k}})^mv_{k-1}-\mathbb{T}(\widetilde{v}_{k})\|_\mu - 24\Vmax\sqrt{\frac{2}{n}\Lambda(n,d,\delta')},
\end{equation}
with probability at least $1-\delta'$, where $\Lambda(n,d,\delta')=2(d+1)\log n + \log\frac{e}{\delta'} + \log\big(9(12e)^{2(d+1)}\big)$. Putting together Equations~\ref{eq:aa3},~\ref{eq:aa4}, and~\ref{eq:aa5} and using the fact that $\mathbb{T}(\widetilde{v}_{k})=v_{k}$, we obtain
\begin{align*}
\|\eta_{k}\|_{2,\mu} = \|(T_{\pi_{k}})^mv_{k-1} - v_{k}\|_\mu &\leq 2\Bigg(2\|(T_{\pi_{k}})^mv_{k-1} - f_{\alpha_*}\|_\mu + 12\Big(\Vmax + \|\alpha_*\|_2\;\sup_x\|\phi(x)\|_2\Big)\sqrt{\frac{2}{n}\log\frac{3}{\delta'}} \\
&+ 4\Vmax\sqrt{\frac{2}{n}\log\left(\frac{3(9e^2n)^{d+1}}{\delta'}\right)}\Bigg) + 24\Vmax\sqrt{\frac{2}{n}\Lambda(n,d,\delta')}.
\end{align*}
The result follows by setting $\delta=3\delta'$ and some simplification.

\newpage
\section{Experimental Results}
\label{s:experiments}
			
In this section, we report the empirical evaluation of CBMPI and compare it to DPI and LSPI. In the experiments, we show that CBMPI, by combining policy and value function approximation, can improve over DPI and LSPI. In these experiments, we are using the same setting as in~\citet{Gabillon11CP} to facilitate the comparison. 




\subsection{Setting}\label{ss:setting}


We consider the mountain car (MC) problem with its standard formulation in which the action noise is bounded in $[-1,1]$ and $\gamma = 0.99$. The value function is approximated using a linear space spanned by a set of radial basis functions (RBFs) evenly distributed over the state space. 

Each CBMPI-based algorithm is run with the same fixed budget $B$ per iteration. CBMPI splits the budget into a rollout budget $B_R = B(1-p)$ used to build the training set of the greedy step and a critic budget $B_C = Bp$ used to build the training set of the evaluation step , where $p \in (0,1)$ is the critic ratio. The rollout budget is divided into $M$ rollouts of length $m$ for each action in $\A$ and each state in the rollout set $\Data'$, i.e.,~$B_R = m  M  N |\A|$. The critic budget is divided into one rollout of length $m$ for each action in $\A$ and each state in the rollout set $\Data$, i.e.,~$B_C = m  n |\A|$. 

 
In Fig.~\ref{fig:ExpMC}, we report the performance of DPI, CBMPI, and LSPI. In MC, the performance is evaluated as the number of steps-to-go with a maximum of $300$. 
 The results are averaged over $1000$ runs. We report the performance of DPI and LSPI at $p=0$ and $p=1$, respectively. DPI can be seen as a special case of CBMPI where $p=0$.
We tested the performance of DPI and CBMPI on a wide range of parameters $(m,M,N,n)$ but we only report their performance for the best choice of $M$ ($M=1$ was the best choice in all the experiments) and different values of $m$.
 

\subsection{Experiments}\label{ss:empR}


\begin{figure*}[t]
\begin{minipage}[t]{0.5\linewidth}
\center
\includegraphics[width=0.9\textwidth]{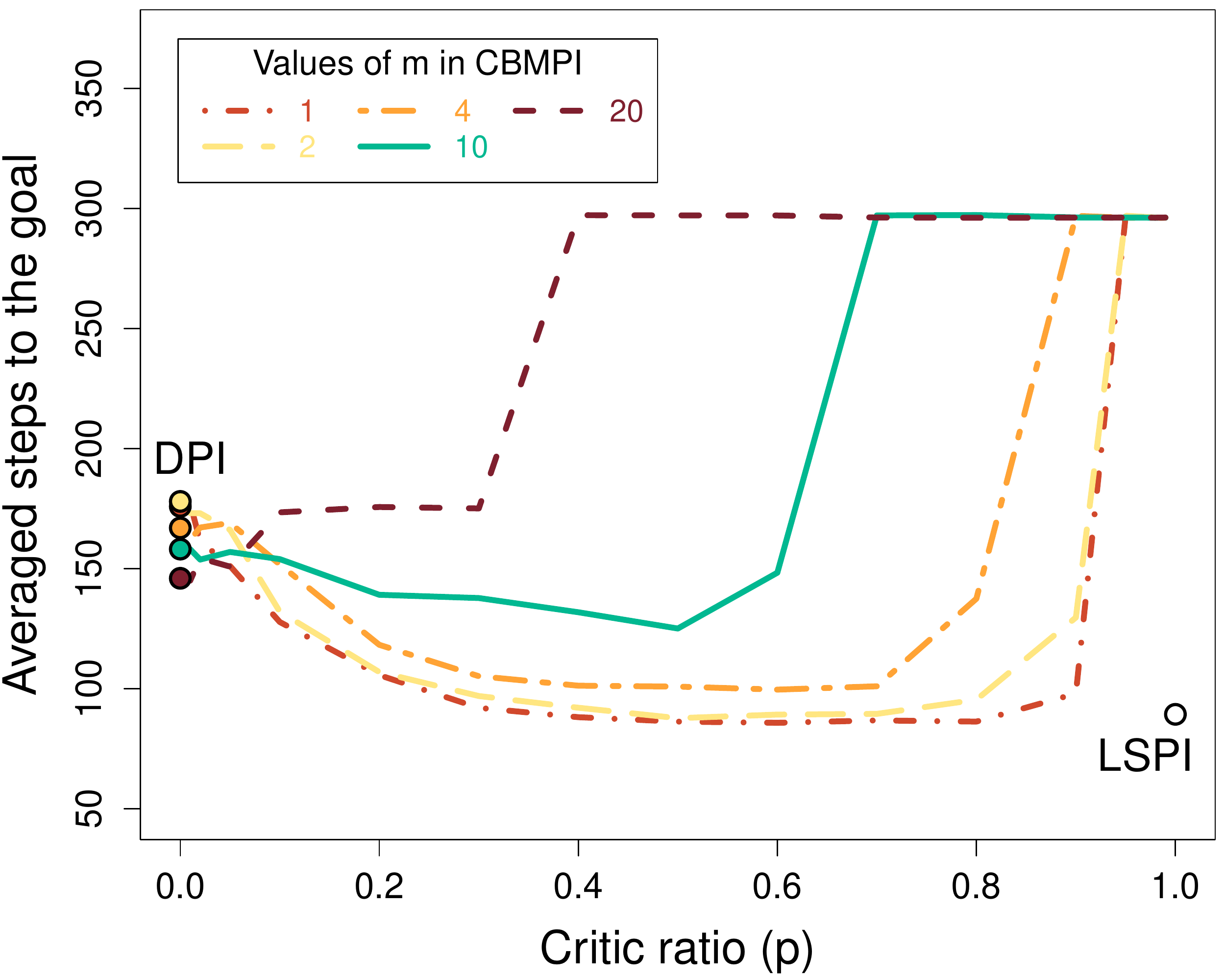} 
\end{minipage} 
\begin{minipage}[t]{.5\linewidth}
\center
\includegraphics[width=0.9\textwidth]{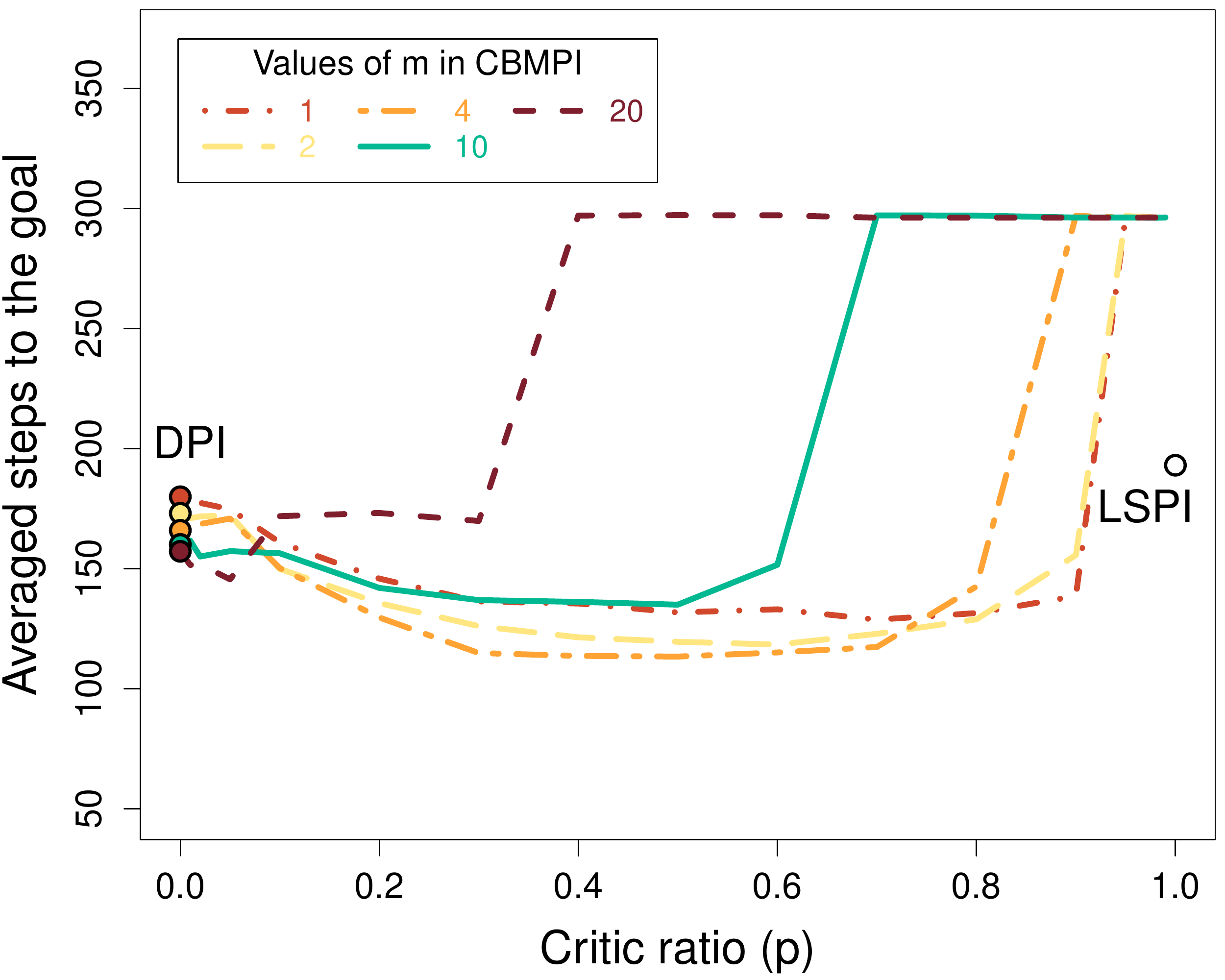} 
\end{minipage}
\caption{Performance of the learned policies in mountain car with two different $2\times2$ RBF grids, the one with good approximation of the value function is on the left and the one with poor performance in approximating the value function is on the right. The total budget $B$ is set to 200. The objective is to minimize the number of steps to the goal.}
\label{fig:ExpMC}
\end{figure*}

As discussed in Remark~\ref{r:m}, the parameter $m$ balances between the error in evaluating the value function and the error in evaluating the policy. The value function approximation error tends to zero for large values of $m$. Although this would suggest to have large values for $m$, the size of the rollout sets would correspondingly decrease as $N = O(B/m)$ and $n = O(B/m)$, thus decreasing the accuracy of both the regression and classification problems. This leads to a trade-off between long rollouts and the number of states in the rollout sets. The solution to this trade-off strictly depends on the capacity of the value function space $\F$. A rich value function space would lead to solve the trade-off for small values of $m$. On the other hand, when the value function space is poor, or as in the DPI case, $m$ should be selected in a way to guarantee a sufficient number of informative rollouts, and at the same time, a large enough rollout sets.

Figure~\ref{fig:ExpMC} shows the learning results in MC with budget $B=200$. On the left panel, the function space is rich enough to approximate $v^*$. Therefore LSPI has almost optimal results (about $80$ steps to reach the goal). On the other hand, DPI achieves a poor performance of about $150$ steps, which is obtained by setting $m=12$ and $N=5$. We also report the performance of CBMPI for different values of $m$ and $p$.  When $p$ is large enough, the value function approximation becomes accurate enough so that the best solution is to have $m=1$. This both corresponds to rollouts built almost entirely on the basis of the approximated value function and to a large number of states in the training set $N$. For $m=1$ and $p\approx 0.8$, CBMPI achieves a slightly better performance than LSPI. 

In the next experiment, we show that CBMPI is able to outperform both DPI and LSPI when $\F$ has a lower accuracy. The results are reported on the right panel of Figure~\ref{fig:ExpMC}. The performance of LSPI now worsens to $190$ steps. Simultaneously one can notice $m=1$ is no longer the best choice for CBMPI. Indeed in the case where $m=1$, CBMPI becomes an approximated version of the value iteration algorithm relying on a function space not rich enough to approximate $v*$. Notice that relying on this space is still better than setting the value function to zero which is the case in DPI. Therefore, we notice an improvement of CBMPI over DPI for $m=4$ which trade-off between the estimates of the value function and the rewards collected by the rollouts. Combining those two, CBMPI also improves upon LSPI.

\end{document}